\newcommand{\shortversion}[1]{}
\newcommand{\longversion}[1]{#1}
\newtheorem{definition}{Definition}
\newtheorem{theorem}{Theorem}
\pgfplotsset{compat=1.16}
\newcommand{\ws}{\mathcal{WS}}
\definecolor{bblue}{HTML}{7FFFD0}
\definecolor{bbblue}{HTML}{0093AF}
\definecolor{bbbblue}{HTML}{483D8B}
\definecolor{yyellow}{HTML}{FADA5E}
\definecolor{yyyellow}{HTML}{FF8F00}
\definecolor{yyyyellow}{HTML}{996515}
\definecolor{ggreen}{HTML}{9BDDA4}
\definecolor{gggreen}{HTML}{29AB87}
\definecolor{ggggreen}{HTML}{195905}
\title{\LARGE \bf
Mobile Recharger Path Planning and Recharge Scheduling \\ in a Multi-Robot Environment}
\author{Tanmoy Kundu$^{1}$ and Indranil Saha$^{2}$
\thanks{*Tanmoy Kundu is supported by Visvesvaraya Ph.D. Fellowship by 
the Department of Electronics and Information Technology, Ministry of Communication and Information Technology, Government of India}
\thanks{$^{1}$ Tanmoy Kundu is with Department of Computer Science and Engineering,
Indian Institute of Technology Kanpur
{\tt\small tanmoy@cse.iitk.ac.in}}%
\thanks{$^{2}$ Indranil Saha is with Department of Computer Science and Engineering,
Indian Institute of Technology Kanpur
{\tt\small isaha@cse.iitk.ac.in}}%
}
\begin{document}

\maketitle
\thispagestyle{empty}
\pagestyle{empty}

\begin{abstract}
In many multi-robot applications, mobile worker robots are often engaged in performing some tasks repetitively by following pre-computed trajectories.
As these robots are battery-powered, they need to get recharged at regular intervals.
We envision that in the future, a few mobile recharger robots will be employed to supply charge to the energy-deficient worker robots recurrently, to keep the overall efficiency of the system optimized.
In this setup, we need to find the time instants and locations for the meeting of the worker robots and recharger robots optimally.
We present a Satisfiability Modulo Theory (SMT)-based approach that captures the activities of the robots in the form of constraints in a sufficiently long finite-length time window (hypercycle) whose repetitions provide their perpetual behavior.
Our SMT encoding ensures that for a chosen length of the hypercycle, the total waiting time of the worker robots due to charge constraints is minimized under certain condition, and close to optimal when the condition does not hold. Moreover, the recharger robots follow the most energy-efficient trajectories. 
We show the efficacy of our approach by comparing it with another variant of the SMT-based method which is not scalable but provides an optimal solution globally, and with a greedy algorithm.
\end{abstract}

\section{Introduction}
\label{sec-intro}


Mobile robots are generally battery powered. They need to recharge their batteries periodically to ensure long-term operation. For example, consider a multi-robot system that has been entrusted with the surveillance responsibility of a large area~\cite{surveillance1,surveillance2}. Each robot has a predefined trajectory that it follows to carry out the surveillance operation. As the robots are battery powered and they are supposed to be operational all the time, there needs to be some mechanism in place to recharge the batteries of the robots whenever required. 
Such a situation also arises when multiple robots are employed for delivering some objects following their pre-planned trajectories in an assembly line~\cite{UnhelkarDBLPSBT18}.


In this paper, we envision a multi-robot service system where a set of 
worker robots (\emph{workers}) follow their respective pre-defined non-intersecting working loops repetitively to carry out their routine work, and a few mobile recharger robots (or \emph{rechargers}) periodically meet the worker robots to recharge their batteries.
This approach is motivated by several mobile charging solutions that have recently appeared in the market~\cite{vw2,SparkCharge}.
\longversion{
Though the mobile rechargers have been developed keeping the electric vehicles in mind, there is a vast potential of this technology to be useful for various multi-robot applications.}

In our proposed system, when a worker goes out of charge, it may have to wait before meeting a recharger. The recharger may take some time to travel to the worker's location. It may even be busy recharging another worker at that time. On the other hand, if the recharger is idle for some time, it may prefer to move to the location of some upcoming recharging ahead of time instead. However, as there are multiple workers, it is challenging to decide towards which worker robot the recharger should move. This decision depends on many factors, such as the distance of the workers from the recharger, lengths of the working loops, the maximum possible charge available to the workers, etc. 
Thus, it is imperative that we \emph{automatically}
synthesize the trajectories of the rechargers in a way that the overall waiting time of all the workers gets minimized, and the rechargers move in the workspace following the most time/energy-efficient trajectories.
We do not consider recharging of the rechargers and assume their recharge requirement to be significantly less frequent than that of the worker robots.


In this paper, we present a Satisfiability Modulo Theory (SMT)~\cite{barrett-smtbookch09} based methodology to decide the initial locations of the rechargers and synthesize their action plans statically, given the working loops of the workers. 
In our approach, we capture the infinite trajectory representing the perpetual behavior of the robot as a finite \emph{hypercycle} whose successive repetitions create the infinite behavior of the workers and the rechargers.
In such a hypercycle, several working loops of the workers can be embedded. 
To be able to repeat a hypercycle, we need to ensure that the initial states of all the robots match with their states at the end of the hypercycle. 
It is also important to decide the initial locations of the rechargers as they have a high impact on the overall efficiency of the system. 
The objective of the synthesis is to minimize the total wait time of all the workers as well as the cost of the movement of the rechargers.

We first attempt to synthesize the trajectories of the rechargers and the recharge schedule of the workers, by reducing the problem into a \emph{monolithic} SMT solving problem. 
However, though this monolithic approach guarantees the optimality of the workers' wait time for a chosen length of the hypercycle, the approach does not scale up well, either with the number of robots
or with the length of the hypercycle. To address this scalability issue, we design an SMT-based two-phase algorithm to solve the problem.
This two-phase algorithm enables us to solve the problems at a larger scale both in terms of the number of robots and the length of the hypercycle. 
We prove that under a certain condition, our two-phase algorithm ensures the optimal wait time of the workers for a chosen length of the hypercycle. 
%

We carry out experiments with up to eight workers and three rechargers. The trajectories have been synthesized within an acceptable time budget (3 hours).
We measure the efficiency of the workers as the proportion of the hypercycle duration during which workers are active \textemdash \ not waiting stand-by to get served by a recharger. 
In most cases, the optimal SMT-based one-shot algorithm faces a timeout, but our two-shot algorithm finds the solution successfully. Moreover, for the instances that the one-shot algorithm can solve, our two-shot algorithm produces the plan with efficiency close to that produced by the one-shot algorithm.
We also compare our SMT-based algorithm with a greedy algorithm. Our SMT-based algorithm achieves $\approx 13-44\%$ better efficiency compared to the greedy algorithm.

In summary, we make the following contributions.
\begin{itemize}
    \item We introduce the mobile recharger path planning problem for a multi-robot system engaged in perpetual activities. Our problem involves both recharge scheduling and path planning of the rechargers to maximize the efficiency of the workers.
    \item We propose an SMT-based solution for the above-mentioned path planning and recharge scheduling problem. Our solution is scalable and produces close to the optimal solution. It also outperforms a carefully crafted greedy algorithm, establishing the efficacy of the SMT-based approach.
    \item We implement our algorithm using the Z3 SMT solver and test the efficiency of the algorithm on various instances of the problem with up to $8$ workers and $3$ rechargers. 
\end{itemize}


\section{Problem}
\label{sec-problem}

\subsection{Preliminaries}
\subsubsection{Workspace ($\ws$)}
In this work, we assume that the robots operate in a 2-D workspace represented as a \mbox{2-D} occupancy grid map.
The grid decomposes the workspace into square-shaped blocks that are assigned unique identifiers to represent their locations in the workspace.
We denote the set of locations in the workspace by $\ws$ and locations covered
by obstacles by $O$. The set of obstacle-free locations in the workspace is $\ws \setminus O$.

\subsubsection{Robot State ($\sigma$)}
The \emph{state} $\sigma$ of a robot consists of 
(a) $\sigma.p$, its position in the workspace, which determines a unique block in the occupancy grid,
(b) $\sigma.v$, its velocity configuration, which represents the current magnitude and direction of the velocity of the robot. We denote the set of all velocity configurations by $V$ and 
assume that it contains a value $v_0$ denoting that the robot is stationary, and
(c) $\sigma.e$, the battery energy available to the robot.


\subsubsection{Motion Primitive ($\gamma$)}
We capture the motion of a robot using a set of \emph{motion primitives} $\Gamma$.
We assume that the robot moves in an occupancy grid in discrete steps of $\tau$ time units.
A motion primitive is a short controllable action that the robot can perform in any time step.
A robot can move from its current location to a destination location by executing a sequence
of motion primitives.

With each motion primitive $\gamma\in\Gamma$, we associate
a \emph{pre-condition} $\mathit{pre}(\gamma)$, which is a formula over the states 
specifying under which conditions a motion primitive can be
executed.
We write $\mathit{post}(\sigma,\gamma)$ for the state the robot attains after executing the motion
primitive $\gamma$ at state $\sigma$.
A motion primitive $\gamma$ causes a displacement of the robot with respect to its current location where the primitive is applied. This displacement is denoted by $\mathit{disp}(\gamma)$. 
Also, each motion primitive $\gamma$ is associated with an energy cost as denoted by $\mathit{cost}(\gamma)$,
which represents the amount of energy spent by the robot while executing the motion primitive.
Thus, if $\sigma' = \mathit{post}(\sigma, \gamma)$, then $\sigma'.p = \sigma.p + \mathit{disp}(\gamma)$
and $\sigma'.e = \sigma.e - \mathit{cost}(\gamma)$.
We use $\mathit{intermediate}(\sigma, \gamma)$ to denote the set 
of grid blocks through which the robot may traverse when $\gamma$ is applied
at state $\sigma$, including the start and end grid blocks.

\subsubsection{Worker ($r_i$) and Recharger ($c_i$) robot}
We consider $n$ workers $R= \{r_1,\ldots, r_n\}$ engaged in some repetitive tasks in the workspace. 
A set of mobile rechargers $C = \{c_1,\ldots, c_m\}$, ideally $m < n$,  are employed for recharging the workers as and when needed and keep the system running uninterruptedly.

The workers move following some predefined trajectories and carry on performing their designated tasks repetitively. 
We assume that the trajectories of the workers do not intersect with each other. That is why we do not need to deal with their collision avoidance.
The \emph{Working loop} for worker ${r_i\in R}$ to perform its designated tasks is denoted by 
${L_i=\langle l^{1}_i,\ldots, l^{|L_i|-1}_i, l^{|L_i|}_i\rangle}$, where the last location on $L_i$ is same as its first location. Each location $l_i^k$ in $L_i$ is associated with a motion primitive $\gamma_i^k \in \Gamma_i$ that enables the worker to move to its next location $l_i^{k+1}$. A state $\sigma_i$ satisfies the precondition $pre(\gamma_i^k)$ if
$\sigma_i.p = l_i^k \wedge \sigma_i.v = v_0$.
We write $\mathit{post}(\sigma_i, \gamma_i^k)$ to denote the state $\sigma_i'$ such that 
\[
    \sigma_i'.p =\left\{
                \begin{array}{ll}
                  l_i^{k+1} \ \text{ if } \sigma_i.p \in\{l_i^1,\ldots l_i^{|L_i|-1} \} \\
                  l_i^1 \ \ \ \ \text{ if } \sigma_i.p = l_i^{|L_i|}
                \end{array}
              \right.
  \]
 $\wedge$ \ $\sigma_i'.v = v_0$ \ $\wedge$ \ $\sigma_i'.e = \sigma_i.e - cost(\gamma_i^k)$.

\noindent
A worker can continue its operation uninterruptedly only if it can get its battery recharged at a regular interval. Worker $r_i$ may stop at any location in $L_i$ for getting recharged. We assume that a worker $r_i$ can recharge its battery only if 
a recharger $c_i$ is positioned somewhere in the neighborhood of $r_i$.
If the worker is at location $p$, its neighborhood is defined as any obstacle-free location 
which is one unit distance (in any direction) away from $p$, i.e.,
$\mathcal{N}(p) = \{p' \mid {p' \in \ws\setminus O} \ \land \ |p'.x - p.x| \leq 1 \ \land \ |p'.y - p.y| \leq 1 \}$.

\subsubsection{Wait Primitive ($\mu$)}
A robot is equipped with a special primitive, called the \emph{wait primitive} $\mu$, that enables it to wait in a location for $\tau$ time units, without causing energy loss. 
called the \emph{wait primitive}, and is denoted by $\mu$.
A state $\sigma$ satisfies the precondition $\mathit{pre}(\mu)$ if $\sigma.v = v_0$.
We write $\mathit{post}(\sigma, \mu)$ to denote the state $\sigma'$ such that 
 $\sigma'.p = \sigma.p$ \ $\wedge$ \ $\sigma'.v = v_0$ \ $\wedge$ \ $\sigma'.e = \sigma.e$ ($cost(\mu)$ = $0$).
 Moreover, $\mathit{intermediate}(\sigma, \mu)$ = $\{\sigma.p\}$.

\subsubsection{Recharge Primitive ($\nu$)}
\label{subsubsec-rechargeprim}
The workers are equipped with a \emph{recharge primitive} ($\nu$). A worker robot $r_i$ can apply $\nu$ primitive to recharge its battery to the maximum possible energy $emax_i$.
Like a motion primitive, $\nu$ is also associated with a precondition $\mathit{pre}(\nu)$ and a postcondition $\mathit{post}(\sigma,\nu)$. The precondition and postcondition depend on the specific recharge strategy.
In this work, we consider the most flexible recharge strategy where a worker robot is allowed to get recharged whenever its battery charge is not full, and it can be recharged by any recharge amount not necessarily up to its full capacity.
For this recharge strategy, a state $\sigma$ satisfies the precondition $\mathit{pre}(\nu)$ if $\sigma.e < emax_i \ \wedge \ \sigma.v = v_0  \wedge \ recharger$, where $recharger$ is a proposition which becomes $true$ when the robot $r_i$ has access to a recharger.
The postcondition $\mathit{post}(\sigma, \nu)$  denotes the state $\sigma'$ such that 
$\sigma'.p = \sigma.p$ \ $\wedge$ \ $\sigma'.v = v_0$ \ $\wedge$ \ $\sigma'.e = \sigma.e + \delta$ $\wedge$ \ $0 < \delta \le \delta_{max}$ \ $\wedge$ \ \ $\sigma'.e \le emax_i$, \ 
$\delta \in \mathbb{R}^+$, where $\delta_{max}$ is the maximum recharge amount per $\tau$ time units. 
Moreover, $\mathit{intermediate}(\sigma, \nu)$ = $\{\sigma.p\}$.


\subsubsection{Action Plan ($\rho$) and Trajectory ($\sigma$)}
\label{subsubsec-motionplan}
We capture the run-time behavior for a robot by a discrete-time transition system.
Let $\sigma_1$ and $\sigma_2$ be two states of the robot.
For a primitive $\rho \in \Gamma \cup \{\mu,\nu\}$, $\sigma_1 \xrightarrow{\rho} \sigma_2$ is a valid transition
iff $\sigma_{1} \models \mathit{pre}(\rho)$, $\sigma_{2} \models \mathit{post}(\sigma_{1}, \rho)$, and
$ \mathit{intermediate}(\sigma_1, \rho) \ \cap$ $O = \emptyset$.

The \textit{action plan} for a robot is defined as a sequence of primitives to be applied to the robot 
to move it in a way that its objective is achieved while satisfying various constraints. 
An action plan is denoted by a (potentially infinite) sequence of primitives $\rho = (\rho_1 \rho_2 \ldots)$, where $\rho_i \in \Gamma \cup \{\mu, \nu\}$ for all $i \in \{1, 2, \ldots \}$. 
The rechargers do not have any recharge primitive $\nu$.

Given the current state $\sigma_0$ of some robot and an action plan \mbox{$\rho=(\rho_1\rho_2\rho_3\ldots)$}, the  \textit{trajectory} of the robot is given 
by \mbox{$\sigma= (\sigma_0 \sigma_1 \sigma_2 \ldots)$} such that for all $i \in \{1,2,3,\ldots\}$, ${\sigma_{i-1} \xrightarrow{\rho_i} \sigma_i}$.

\subsubsection{Hypercycle ($T$)}

We synthesize the recharge schedules for the workers and the trajectories for the rechargers in a \textit{hypercycle}, which is a time window of $T$ units.
We abuse the notation slightly and denote both the hypercycle and its length by $T$. 
Successive repetitions of $T$ essentially creates a long (potentially infinite) execution of the system. Hypercycle $T$ is a parameter in our algorithm. As will be clear later, the efficiency of the worker robots increases with the value of $T$, but the computation time of our algorithm also increases with $T$. 

To be able to repeat any number of hypercycles, the states (location and charge level) of the robots at the beginning of a hypercycle should match with that at the end of the hypercycle.
During the time window $T$, the trajectory of worker robot $r_i$ 
is denoted by $S_i=\langle \sigma^{1}_i,\ldots, \sigma^{T}_i\rangle$ and the trajectory of recharger $c_i$ is $\hat{S}_i=\langle \hat{\sigma}_i,\ldots, \hat{\sigma}_i^{T}\rangle$, where \\$\forall r_i \in R : \sigma_i^T=\sigma_i^1$, and $\forall c_i \in C: \hat{\sigma}_i^T=\hat{\sigma}_i^1$. The worker robots start their operation with full charge, i.e., $\sigma_i^1.e = emax_i$.
Once the worker robots are at their final locations, they need to be recharged to full charge to accomplish the state matching, i.e., $\sigma_i^T.e = emax_i$.

During $T$, every worker robot $r_i$ completes several rounds of its working loop $L_i$.
Worker robot $r_i$'s trajectory $S_i$ is composed of multiple concatenations of working loop $L_i$ of worker $r_i$.
Also, due to recharging and idle waiting, some of the trajectory points in $L_i$ may be repeated, as 
as the robot does not change its position during those events. We represent the $j^{th}$ extended working loop as $L_i^j$, where zero or multiple repetitions of trajectory points may occur. Thus, $S_i$ is composed of multiple (say $g$) concatenations of $L_i^j$s :
$$S_i \equiv \langle \sigma^{1}_i.p,\ldots, \sigma^{T}_i.p\rangle \equiv \underbrace{L_i^1\circ L_i^2 \circ \ldots \circ L_i^g}_{g \ \text{times}}$$
where $\circ$ is the concatenation operator. Also, ${\sigma}_i^1.p = {\sigma}_i^T.p = l_i^1$.

Also, recharger $c_j$'s trajectory $\hat{S}_j$ is synthesized by our algorithm as described later.


 The finite length trajectories during $T$ for all the workers and rechargers are captured as 
 $$S=\langle S_1, S_2, \ldots, S_{|R|}, \hat{S}_1, \hat{S}_2, \ldots, \hat{S}_{|C|}\rangle.$$


\subsection{Problem Definition}
\label{subsec-probdef}


In this section, we define the problem formally.
The inputs to the problem are the working loops ${L_i=\langle l^{1}_i,\ldots, l^{|L_i|-1}_i, l^{|L_i|}_i\rangle}$ with corresponding sequence of motion primitives $\langle \gamma^{1}_i,\ldots, \gamma^{|L_i|-1}_i, \gamma^{|L_i|}_i\rangle$ and maximum energy $emax_i$ for each worker $r_i$, the set of motion primitives $\Gamma_i$ and $\hat{\Gamma}_i$ for each worker $r_i$ and each recharger $c_i$, the maximum recharge amount $\delta_{max}$, and a set of \emph{potential} initial locations $P \subseteq \ws\setminus O$ for the rechargers.

At each time step of $T$, the workers and the rechargers perform some actions. We have to decide the actions of the robots at every time step in order to find the optimal solution to the planning problem. The action plan for worker $r_i$ during time window $T$ is defined as:
$\rho_i = \langle \rho^{1}_i,\ldots, \rho^{T-1}_i \rangle$, 
where $\rho^{t}_i \in \Gamma_i \cup \{\mu,\nu\}$. 
Similarly, the action plan for recharger $c_i$ is defined by 
${\hat{\rho}_i = \langle \hat{\rho}^{1}_i,\ldots, \hat{\rho}^{T-1}_i\rangle}$,
where $\hat{\rho}_i^t \in \hat{\Gamma}_i \cup \{\mu\}$.
For the whole system of robots, the consolidated action plan is defined as $\rho = \langle \rho_1, \rho_2,\ldots, \rho_{|R|}, \hat{\rho}_1, \hat{\rho}_2,\ldots, \hat{\rho}_{|C|}\rangle$.
We denote by $\delta_i^t$, $0<\delta_i^t\le \delta_{max}$, the amount of energy used in the recharge of worker $r_i$ at the $t$-th time instant if $\rho^{t}_i = \nu$. 


We now formulate the problem as an \emph{optimization problem}.
The decision variables for this optimization problem are the action plans of the workers and the rechargers, the  recharge amount at every recharge instance, and the initial location of the rechargers decided from a given set of potential initial locations $P$. 

	\begin{itemize}
		\item 	$\forall t \in \{1, \ldots, T-1\}. \rho^t = \langle \rho_1^t, \ldots, \rho_{|R|}^t, \hat{\rho}_1^t, \ldots, \hat{\rho}_{|C|}^t \rangle$,
		\item 	$\forall t \in \{1, \ldots, T-1\}. \forall r_i \in R. \ \ \delta_i^t\  \text{when }\rho_i^t=\nu$,
		\item 	$\forall c_j \in C. \ \ \hat{\sigma}_j^1.p \in P$.
	\end{itemize}	

The objective of the optimization problem is to minimize the total waiting time $\mathcal{W}$ of the workers, and the travel cost $\mathcal{U}$ of the recharger, with weights $w_1$ and $w_2$ respectively:
$$\mathit{Minimize}\left(w_1 \cdot {\mathcal{W}} + w_2 \cdot \mathcal{U}\right )$$
where
$$\mathcal{W}:= \sum_{t\in T-1} \sum_{r_i\in R} [\rho_i^t = \mu], \ \ \ \  \mathcal{U}:=\sum_{t\in T-1} \sum_{c_i\in C} \hat{\rho}_i^t.cost,$$
$[\rho_i^t = \mu]$ is $1$ if $\rho_i^t = \mu$ and $0$ otherwise.

The above optimization problem has to be solved under several constraints, as introduced in Section~\ref{sec-algo}. 

To measure the performance of a planning algorithm, we introduce the following notion of efficiency of a recharge plan for a multi-robot system.
\begin{definition}{\emph{Efficiency.}} The efficiency $E$ of a multi-robot system, under a recharge plan, is defined as the percentage of time spent by the workers doing their tasks or recharging (excluding wait time to get recharged), during time window $T$. Mathematically,
\begin{equation}
E = \frac{|R| \cdot T - \mathcal{W}} {|R| \cdot T} \times 100.
\label{eq-efficiency}
\end{equation}
\end{definition}



\shortversion{
We illustrate the problem with an example which is available in the full version of the paper~\cite{KunduS21}.
}

\longversion{
\subsection{Example}

\begin{figure}[t!]
{
\begin{center}
\subfigure[]{\resizebox{0.325\linewidth}{!} { \includegraphics[scale=1]{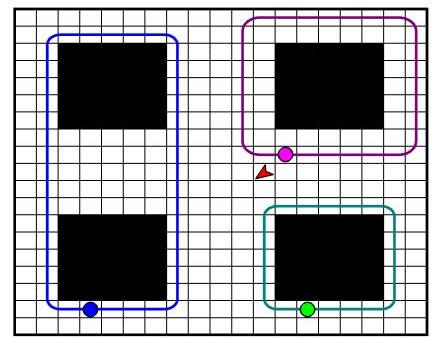}}}
\subfigure[]{\resizebox{0.325\linewidth}{!} {\includegraphics[scale=1]{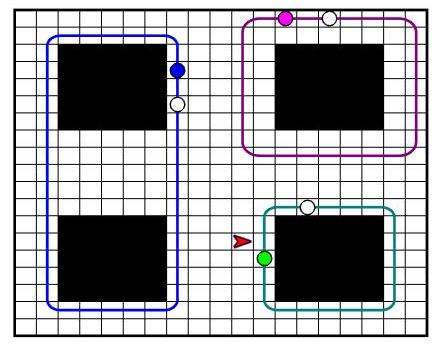}}}
\subfigure[]{\resizebox{0.325\linewidth}{!} { \includegraphics[scale=1]{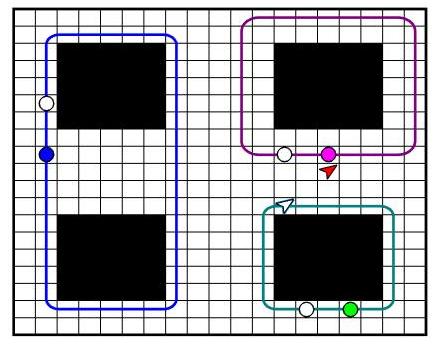}}}
\subfigure[]{\resizebox{0.325\linewidth}{!} {\includegraphics[scale=1]{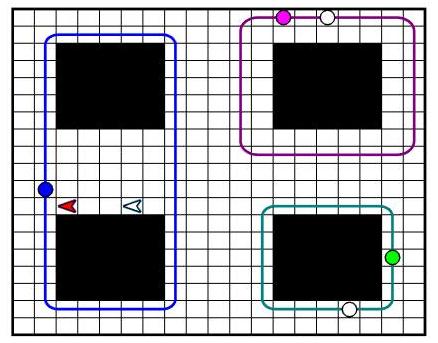}}}
\subfigure[]{\resizebox{0.325\linewidth}{!} { \includegraphics[scale=1]{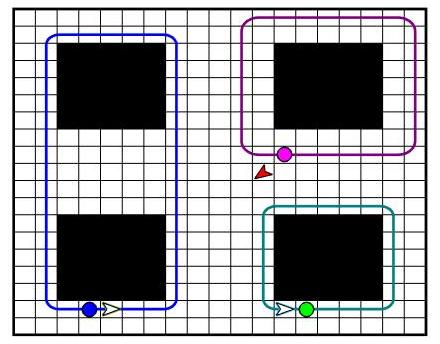}}}
\subfigure[]{\resizebox{0.315\linewidth}{!} {\includegraphics[scale=1]{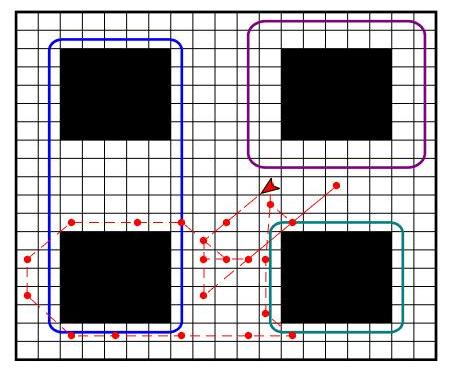}}}
\end{center}
}
\caption{{\small Snapshots of plan execution of a recharger (arrowhead) for three workers (circles): (a) initial position,
(b) the recharger waits for the green worker, (c) both the red worker and the recharger move together, (d) the blue worker waits for the recharger, (e) the robots return to the same initial and final positions, (f) recharger's trajectory.}
}
\label{fig:motivating-example}
\end{figure}

We present a simple example involving three worker robots and one recharger robot to illustrate the trajectories generated by our algorithm (Figure~\ref{fig:motivating-example}). The workers are shown as circles and the recharger as an arrow-head. To recharge a worker, a recharger has to be placed in some neighboring location of the worker for one or more consecutive time steps. The present states of the robots are colored, and the previous states are shown as hollow circles. 
The initial positions of the robots are shown in  Figure~\ref{fig:motivating-example}(a).
At time points 8 and 9, the recharger places itself at some suitable location even before the worker on the green loop goes out of charge (Figure~\ref{fig:motivating-example}(b)).
At time steps 13 and 14, to recharge the worker on the red loop, both the worker and the recharger move simultaneously towards each other to perform recharging for two consecutive time steps (Figure~\ref{fig:motivating-example}(c)).
At time steps 35 and 36, the worker on the blue loop has to wait before the recharger can come close to it for recharging (Figure~\ref{fig:motivating-example}(d)). Within 40 time steps, the workers are back to their respective initial locations; however, their charge levels do not match with their initial charge levels. Also, the current location of the recharger is not the same as its initial location. Few more time steps (in this case, 8) are needed to restore the workers to their initial charge levels. From time step 41 onwards, the recharger moves to each worker to boost them to their initial charge levels and places itself back to its initial location. Thus, the entire system's final state matches exactly with the initial state at time step 48 (Figure~\ref{fig:motivating-example}(e)). Hence, for this example, 48 is the length of the \textit{extended hypercycle} when the length of the \textit{original hypercycle} is 40 (These terms will be introduced formally in the next section).
Figure~\ref{fig:motivating-example}(f) shows the trajectory of the recharger in the extended hypercycle. In the following section, we will provide an SMT-based methodology to generate such plans for mobile rechargers automatically.
}
\section{Algorithm}
\label{sec-algo}

\subsection{One-Shot Algorithm}

We first present a naive approach to solve the planning problem and use it as the baseline to evaluate our proposed Two-shot algorithm discussed later. 
In this approach, we formulate the problem as a monolithic (one-shot) optimization problem with the objective function and the decision variables described in the problem definition above. Here we present the constraints of the optimization problem.

\medskip
\noindent
\textit{Constraints for the workers:}
For worker $r_i$, we denote the constraints as:
$$|[Worker_i]| = I(\sigma_i^1) \wedge \displaystyle\mathop\bigwedge_{t = 1}^{T-1} Tr(\sigma_i^{t}, \rho_i^{t}, \sigma_i^{t+1}) \wedge F(\sigma_i^T) \ .$$ 
Here, $I$ is the initial location constraint of $r_i$ which is a predefined location on the working loop, and the initial charge of $r_i$ is equal to its full charge capacity.  
$$I(\sigma_i^1) \equiv  \sigma_i^1.p =l_i^1.p\ \wedge \ \sigma_i^1.v= v_0\  \wedge \ \sigma_i^1.e= emax_i$$
The transition constraint $Tr$ of any worker is based on three types of primitives --- motion, wait and recharge; and that the battery charge at any time point is between zero and maximum (full) charge.
\begin{flalign*}
Tr(\sigma_i^{t},\rho_i^{t},\sigma_i^{t+1}) & \equiv 0 \le \sigma^t_i.e \le emax_i \ \wedge \ \rho_i^{t} \in \Gamma_i \cup \{\mu, \nu\} & \\
& \wedge \ \sigma_i^t \models \mathit{pre}(\rho_i^{t}) \ \wedge \ \sigma^{t+1}_i \models \mathit{post}(\sigma_i^{t}, \rho_i^{t}) &
\end{flalign*}
To connect the trajectory of worker $r_i$ with its working loop $L_i$, we introduce a variable $\theta_i^t$, $1 \le t \le T-1$, that maps $r_i$'s position to an appropriate trajectory point on $L_i$, at $t^{th}$ time step. To achieve this, we conjunct 
$\theta_i^1 = 1\ \wedge\ \rho_i^1 = \gamma_i^1$ with $I(\sigma_i^1)$ and 
the following constraints with $Tr(\sigma_i^{t},\rho_i^{t},\sigma_i^{t+1})$ :
\begin{align*}
&\rho_i^t \in \{\mu,\nu\} \rightarrow \theta_i^{t+1} = \theta_i^t\\
&(\rho_i^t \in \Gamma_i \ \wedge \ \theta_i^t < |L_i|) \rightarrow  (\rho_i^t = \gamma_i^{\theta_i^t} \ \wedge \ \theta_i^{t+1} = \theta_i^t + 1)\\
&(\rho_i^t \in \Gamma_i \ \wedge \ \theta_i^t = |L_i|) \rightarrow (\rho_i^t = \gamma_i^{|L_i|} \ \wedge \ \theta_i^{t+1} = 1
)
\end{align*}
Note that the sequence of motion primitives corresponding to the working loop of $r_i$ is $\langle \gamma^{1}_i,\ldots, \gamma^{|L_i|-1}_i, \gamma^{|L_i|}_i\rangle$, and $\theta_i^t$ keeps track of the index of the motion primitive that has to be applied in the current step $t$.

The final state constraint $F$ captures that worker's location and charge level should match at the final and initial time points of the hypercycle: 
$$F(\sigma_i^T) \equiv\ \sigma_i^T.p = \sigma_i^1.p\ \wedge \ \sigma_i^T.v= v_0\ \wedge\ \sigma_i^T.e=\sigma_i^1.e.$$

\medskip
\noindent	
\textit{Constraints for the rechargers:}
The constraints for a recharger $c_j$ in hypercycle $T$ are given by\\
$$|[Recharger_j]| = \hat{I}(\hat{\sigma}_j^1)\ \wedge\ \displaystyle\mathop\bigwedge_{t = 1}^{T-1} \widehat{Tr}(\hat{\sigma}_j^t, \hat{\rho}_j^{t}, \hat{\sigma}_j^{t+1})\ \wedge\  \hat{F}(\hat{\sigma}_j^T).$$
The initial constraint $\hat{I}$ sets the initial location of recharger $c_i$ from the set of potential initial locations $P$, which is found by the solver:  
$$\hat{I}(\hat{\sigma}_j^1) \equiv\  \hat{\sigma}_j^1.p \in P \ \wedge \ \sigma_j^1.v= v_0.$$
Transition constraint $\widehat{Tr}$ of any recharger $c_j$ is based on two types of primitives --- motion and wait; state transition satisfies the precondition and
postconditions described above. Also, recharger $c_j$'s transition captures obstacle avoidance and collision avoidance with workers and other rechargers.

\noindent
\begin{flalign*}
&\widehat{Tr}(\hat{\sigma}_j^t, \hat{\rho}_j^{t}, \hat{\sigma}_j^{t+1}) \equiv
\hat{\rho}_j^{t} \in \Gamma_j \cup \{\mu\}\ \wedge  &&\\
&\hat{\sigma}_j^t \models \mathit{pre}(\hat{\rho}_j^{t})\ \wedge\ \hat{\sigma}_{t+1} \models \mathit{post}(\hat{\sigma}_j^{t}, \hat{\rho}_j^{t})\ \wedge && \\
&\textit{intermediate}(\hat{\sigma}_j^t, \hat{\rho}_j^{t}) \notin O\ \wedge && \\ 
&\textit{intermediate}(\hat{\sigma}_j^t, \hat{\rho}_j^{t}) \ \cap
\bigcup\limits_{r_j \in R} \textit{intermediate}(\sigma_j^t, \rho_j^{t}) = \emptyset\  \wedge && \\
&\textit{intermediate}(\hat{\sigma}_j^t, \hat{\rho}_j^{t}) \ \cap
\bigcup\limits_{c_k \in C \setminus \{c_j\}} \textit{intermediate}(\hat{\sigma}_k^t, \hat{\rho}_k^{t}) = \emptyset  &&
\end{flalign*}
Collision avoidance constraints ensure that a recharger does not occupy the same cell occupied by any worker or any other recharger at any time step $t$.

Final state constraint $\hat{F}$ at time point $T$ matches the final location of a recharger with its initial location. We do not consider the charge level of rechargers in this paper. 
$$\hat{F}(\hat{\sigma}_i^T) \equiv\  \hat{\sigma}_i^T.p = \hat{\sigma}_i^1.p \ \wedge \hat{\sigma}_i^T.v = v_0$$

\medskip
\noindent	
\textit{Constraints for the entire system:}
The constraints for the workers and the rechargers collectively make up the constraints for the entire system. 
$$\resizebox{0.475\textwidth}{!}{ $|[System]| = \left(\displaystyle\mathop{\bigwedge}_{r_i \in R} |[Worker_i]|\right) \wedge \left(\displaystyle\mathop{\bigwedge}_{c_j \in C} |[Recharger_j]|\right)$}$$

The one-shot algorithm synthesizes the trajectory of the recharger minimizing the waiting time of the workers. As we solve the planning problem for a fixed number of worker robots and a fixed length of the hypercycle, from Equation~\eqref{eq-efficiency}, we can ensure that the algorithm provides a plan with the maximal efficiency.
This discussion leads to the following theorem.
\begin{theorem}
For a given length of hypercycle $T$, if the one-shot algorithm is solved with weights $w_1 = 1$ and $w_2 = 0$ in the objective function introduced in Section~\ref{subsec-probdef}, then it produces a plan with maximum efficiency.
\end{theorem}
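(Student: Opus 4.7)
The plan is to observe that the theorem is essentially an algebraic rearrangement together with a completeness argument for the SMT encoding: maximizing efficiency $E$ is equivalent to minimizing the total waiting time $\mathcal{W}$, and the one-shot algorithm with $w_1=1,\,w_2=0$ minimizes exactly $\mathcal{W}$ over all feasible plans of hypercycle length $T$. So the bulk of the proof is bookkeeping; the only nontrivial point is arguing that the SMT constraint set $|[System]|$ faithfully represents every feasible recharge plan of length $T$, so that the solver's optimum is the true optimum.

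First I would unfold the definition of efficiency from Equation~\eqref{eq-efficiency}. Because $|R|$ and $T$ are fixed parameters of the problem instance, the map $\mathcal{W}\mapsto E = \bigl(|R|\cdot T - \mathcal{W}\bigr)/\bigl(|R|\cdot T\bigr) \times 100$ is strictly decreasing in $\mathcal{W}$. Hence over any nonempty set of feasible plans, a plan achieves the maximum efficiency if and only if it achieves the minimum total wait time $\mathcal{W}$. Setting $w_1=1,\,w_2=0$ reduces the one-shot objective $w_1\cdot\mathcal{W}+w_2\cdot\mathcal{U}$ to exactly $\mathcal{W}$, so the solver is minimizing the right quantity.

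Next I would argue soundness and completeness of the encoding with respect to the set of feasible plans. Soundness is immediate: every assignment satisfying $|[System]|$ corresponds, by construction of $I$, $Tr$, $F$, $\hat I$, $\widehat{Tr}$, $\hat F$, and the $\theta_i^t$ bookkeeping variables, to an action plan $\rho$ that (i) begins with each worker at the right initial position on its loop with full charge, (ii) applies at each step only primitives whose preconditions hold and whose postconditions yield the next state, (iii) respects the battery bounds $0\le\sigma_i^t.e\le emax_i$, the working-loop order via $\theta_i^t$, the obstacle and collision constraints for rechargers, and (iv) closes the hypercycle so that successive repetitions produce a well-defined perpetual behavior. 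Completeness follows by reversing this correspondence: given any feasible plan $\rho$ of length $T$ satisfying the problem definition in Section~\ref{subsec-probdef}, the induced assignment to $\sigma_i^t,\hat\sigma_j^t,\rho_i^t,\hat\rho_j^t,\theta_i^t,\delta_i^t$ satisfies every conjunct of $|[System]|$. Thus the SMT feasible set is in bijection with the set of feasible plans, and on this common set $\mathcal{W}$ takes the same value.

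Combining these two steps, the optimal solution returned by the one-shot algorithm attains $\min \mathcal{W}$ over all feasible plans of hypercycle length $T$, and by the monotone correspondence above this is the same plan that attains $\max E$, proving the theorem. The main obstacle I anticipate is not the algebra but justifying the completeness direction cleanly: one has to check that the $\theta_i^t$-variables and the transition encoding do not inadvertently forbid any legal interleaving of motion, wait, and recharge primitives (e.g., that a worker can wait arbitrarily before applying its next loop primitive, and that the full-charge final condition $\sigma_i^T.e=emax_i$ encoded inside $F$ is compatible with every plan the problem definition allows). Once this is verified, the rest is a one-line consequence of the decreasing-in-$\mathcal{W}$ formula for $E$.
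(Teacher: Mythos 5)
Your proposal is correct and follows essentially the same route as the paper: since $|R|$ and $T$ are fixed, Equation~\eqref{eq-efficiency} makes $E$ strictly decreasing in $\mathcal{W}$, and with $w_1=1$, $w_2=0$ the one-shot objective is exactly $\mathcal{W}$, so the solver's optimum maximizes efficiency. The paper's justification is in fact terser than yours --- it takes the faithfulness of the encoding for granted --- so your added remarks on soundness/completeness of $|[System]|$ only make the argument more careful, not different.
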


Though the one-shot algorithm provides a plan with optimal efficiency, it suffers from a \emph{lack of scalability}, as shown in the experimental results.

\subsection{Two-Shot Algorithm}
\label{subsec-twoshot}
To address the above scalability issue, we design an algorithm by splitting the problem into two phases.
 
\subsubsection{First phase}
\label{subsubsec-twoshot-first}
The duration of the first phase is equal to the length of the original hypercycle $T$. In this phase, the workers traverse the maximum possible number of working loops, and within $T$ time duration, they return to their respective initial locations. This phase handles matching of \textit{workers' start and end locations only}; charge level matching or rechargers' location matching are not handled in this phase.
During $T$, a worker may require several \textit{intermediate rechargings}, and it returns to its initial location where its charge is below its initial (full) charge.

\medskip
\noindent 
\textit{Objective:}
In this phase, we optimize only the total waiting time of the robots, i.e., minimize $\mathcal{W}$ only.

\medskip
\noindent
\textit{Constraints for worker and recharger robots:}
Constraints for the worker and recharger robots are the same as that of the One-shot algorithm discussed above, with a few exceptions.
For worker $r_i$, the final state constraints $F'$ do not capture charge matching (with the initial state) at the end of $T$. Therefore,  $F'(\sigma_i^T)\equiv\ \sigma_i^T.p=\sigma_i^1.p \ \ \wedge \ \ \sigma_i^T.v=v_0$. For recharger $c_j$, we do not need to enforce any restriction (including location matching) on its final state at the end of $T$. 

\medskip
\noindent 
\textit{Outcomes forwarded to the second phase:}
The following outcomes are forwarded to the second phase of our algorithm.

\textit{(i) Intermediate recharging instances:} This refers to the time point and location of every recharging occurred during $T$.
For each recharger $c_j$, we store the set of $\langle \tau, p\rangle$ (recharging instance) such that $c_j$ recharges some worker $r_k$ at time point $\tau$ and location $p$:
$$\resizebox{0.475\textwidth}{!}{$\eta_j=\{\langle \tau, p\rangle \ \big|\ \rho_j^\tau=\mu\ \land\ \exists r_k\in R,\  \rho_k^\tau=\nu\ \land\ \hat{\sigma}_j^\tau.p \in \mathcal{N}(\sigma_k^\tau.p) \}.$}$$

\textit{(ii) State of the workers:} After location matching of the workers in the first phase, we record the time point ($\tau$) when a worker halts finally, and the number of recharge instances ($d$) required to recharge it fully. We capture this information in $\zeta_i$ for worker robot $r_i$ :
$$
\resizebox{0.475\textwidth}{!}{$\zeta_i  =\langle \tau,d\rangle:
 \forall t: \tau \le t \le T. \ \sigma_i^t.p  = \sigma_i^1.p \land
d = \left\lceil \frac{emax_i - \sigma_i^\tau.e}{\delta_{max}}\right\rceil$}$$

\textit{(iii) Initial location of the rechargers:} For each recharger $c_j$, its initial location $\hat{\sigma}_j^1.p \in P$.

\subsubsection{Second phase}
In this phase, we minimally extend the length of the original hypercycle ($T$) to meet the remaining matching constraints, viz., charge level matching of workers, and location matching of rechargers. 
We denote the duration of the extended hypercycle by $T' (> T)$. 
For the time duration $(T'-T)$, we synthesize the action plan of the rechargers such that they move to the workers to recharge them up to their initial (full) charge, and then they return to their respective initial locations. Thus, all the matching requirements are fulfilled at the end of $T'$, after the second phase.


In this phase, we essentially synthesize trajectories for the rechargers, from time point $1$ till $T'$, with some waypoints already received from the first phase. These waypoints are the initial and final locations (which are the same) of the rechargers and the time instants and duration for intermediate recharging. 


\medskip
\noindent 
\textit{Objective:}
In this phase, we optimize the total cost of the recharger robots, i.e., the objective is to minimize $\mathcal{U}$.

\medskip
\noindent
\textit{Constraints for workers:}
Remember that $\zeta_i.\tau$ is the time point when worker $r_i$ returns to its initial location during the first phase. Between time points $\zeta_i.\tau$ and $T'$ there is a time instant $t$ when some recharger $c_j$ starts recharging worker $r_i$; and this goes on for duration $\zeta_i.d$ thereafter. As obvious, recharger $c_j$ has to be placed at the neighborhood $r_i$ for duration $\zeta_i.d$ :
\begin{align*}
|[Worker_i'']| \equiv & \  \exists t, \ \zeta_i.\tau \le t \le T'-\zeta_i.d+1 \text{ and } \exists c_j \in C\\
\hat{\rho}_j^t= \mu & \ \wedge\ \hat{\rho}_j^{t+1}=\mu\ \wedge \ldots\ \wedge\  \hat{\rho}_j^{t+\zeta.d-1}=\mu\ \wedge\\
\hat{\sigma}_j^t.p= &\hat{\sigma}_j^{t+1}.p=\ldots= \hat{\sigma}_j^{t+\zeta_i.d-1}.p \in \mathcal{N}(\zeta_i.p) 
\end{align*}

\medskip
\noindent
\textit{Constraints for rechargers:}
The constraints for recharger $c_j$ are:
\begin{align*}
|[Recharger_j'']| \equiv & \ \hat{I''}(\hat{\sigma}_j^1)\ \wedge\ \displaystyle\mathop\bigwedge_{t = 1}^{T'-1} \widehat{Tr''}(\hat{\sigma}_j^t, \hat{\rho}_j^{t+1}, \hat{\sigma}_j^{t+1})\\ 
&\ \wedge\ \hat{F''}(\hat{\sigma}_j^T)\ \wedge\ \widehat{IR}(\eta_j)
\end{align*}

\noindent
In this phase, $T'(>T)$ is the last time point of the extended hypercycle. Constraint $I''$ sets the initial location of recharger $c_j$, as received from the first phase, i.e., \mbox{$\hat{I''}(\hat{\sigma}_j^1) \equiv \hat{\sigma}_j^1.p = \hat{l}_j^1$}.
Constraint $F''$ matches the last location of $c_j$ with its initial location, i.e., \mbox{$\hat{F''}(\hat{\sigma}_j^{T'}) \equiv\ \hat{\sigma}_j^{T'} = \hat{\sigma}_j^1$}.
Constraint $\widehat{IR}$ handles intermediate recharging (and associated rules) of workers during the original hypercycle $T$ based on $\eta_j$ which is already received from the first phase:
\mbox{\scalebox{0.96}{$\widehat{IR}(\eta_j) \equiv \forall t\in T: \langle t, p \rangle \in \eta_j \implies \hat{\sigma}_j^t.p \in \mathcal{N}(p)\ \land\ \hat{\rho}_j^t=\mu$}}.
Thus, at the end of the second phase, our algorithm meets all three necessary matchings to enable repetitions of the same extended-hypercycle $T'$ for arbitrary number of times.

\medskip
\noindent
\textit {Optimality of $T'$:} After the first phase, we run a loop in which $T$-value is increased by one in every iteration and is assigned to $T'$. In every iteration, the constraints are checked for satisfiability. Whenever they are satisfied for the first time, the algorithm terminates, and we get a plan for the minimal value of $T'$. 

For a given $T$, the two-shot algorithm is not guaranteed to produce a plan with optimal efficiency. However, the following theorem establishes the conditional optimality of the two-shot algorithm.

\begin{theorem}
For a given original hypercycle length $T$, if for all the robots $r_i \in R$, the length of the working loop $L_i$ is strictly greater than $T'- \zeta_i.\tau$, i.e., $\forall r_i \in R. \ |L_i| > T'- \zeta_i.\tau$, then the solution produced by the two phase algorithm ensures maximal working efficiency.
\end{theorem}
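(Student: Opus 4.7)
The plan is to show that the two-shot plan, viewed as a plan on the extended hypercycle $T'$, achieves the same total wait time as the one-shot optimum on $T'$; since Theorem~1 guarantees that the one-shot optimum on $T'$ is maximally efficient, so is the two-shot output.

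First I would fix the benchmark: applying Theorem~1 with hypercycle length $T'$ says that the one-shot SMT would return a plan $\pi^{*}$ of minimum total wait time $\mathcal{W}^{*}$. Because the two-shot algorithm's output already has hypercycle $T'$, it suffices to prove $\mathcal{W}^{TS}=\mathcal{W}^{*}$.

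Next I would account for each worker's useful (non-wait) steps. In the two-shot output, worker $r_i$ executes $k_i$ complete working loops in $[1,\zeta_i.\tau]$, $n_{\nu,i}^{(1)}$ intermediate recharges during phase~1, and $\zeta_i.d$ tail recharges at $l_i^{1}$ during $(T,T']$; all other slots are waits, so $\mathcal{W}_i^{TS}=T' - k_i|L_i| - n_{\nu,i}^{(1)} - \zeta_i.d$. Because phase~1 already minimises $\mathcal{W}$ on $[1,T]$ subject to location matching at time $T$, the quantity $k_i|L_i|+n_{\nu,i}^{(1)}$ is the maximum useful-step count achievable on $[1,T]$ under that constraint; and in phase~2 every tail recharge runs at full size $\delta_{max}$ by the minimality of $T'$, so no further recharge slot can be squeezed into $(T,T']$.

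The core step---and the hardest---is ruling out that $\pi^{*}$ can insert extra useful work by pushing activity past time $T$. This is exactly where the hypothesis $|L_i| > T'-\zeta_i.\tau$ enters. In any plan on $T'$, worker $r_i$'s last completed loop ends at some time $t_i^{*}$, and after $t_i^{*}$ the worker must stay at $l_i^{1}$ until $T'$ (otherwise leaving $l_i^{1}$ would require yet another complete loop of length $|L_i|$, contradicting the maximality of $t_i^{*}$). Since two-shot already leaves a tail of length $T'-\zeta_i.\tau < |L_i|$, no plan can finish its last loop strictly later than $\zeta_i.\tau$ and simultaneously raise its loop count; together with phase~1's optimality, this forces $k_i^{*}\le k_i$. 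The energy-balance identity $k_i^{*}\, E_{L_i}=\sum_j \delta_j$ with $\delta_j\le \delta_{max}$ then caps the recharges $\pi^{*}$ can insert, yielding $k_i^{*}|L_i|+n_{\nu,i}^{*}\le k_i|L_i|+n_{\nu,i}^{(1)}+\zeta_i.d$ for every $r_i$. Summing over workers gives $\mathcal{W}^{TS}\le\mathcal{W}^{*}$, and optimality of $\pi^{*}$ supplies the reverse inequality.

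The main obstacle I anticipate is the coupling across workers through the shared rechargers: per-worker waits cannot be optimised independently. Handling this cleanly will require an exchange argument showing that any gain $\pi^{*}$ extracts for one worker by re-routing a recharger is either blocked by the loop-length hypothesis (no room to fit another loop past $T$) or already available as an alternative phase~1 optimum in the joint minimisation.
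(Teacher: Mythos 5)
Your core argument is the same as the paper's, which is only three sentences long: (i) phase~1 already makes each worker traverse the maximal number of working loops within $[1,T]$, and (ii) since worker $r_i$ halts at $\zeta_i.\tau$ and $|L_i| > T' - \zeta_i.\tau$, no additional loop can be fitted into the extended hypercycle, hence the plan is maximally efficient. These are exactly your ``phase-1 optimality'' observation and your ``core step.'' Everything else you add --- benchmarking against the one-shot optimum on $T'$ via Theorem~1, the per-worker ledger $\mathcal{W}_i^{TS}=T' - k_i|L_i| - n_{\nu,i}^{(1)} - \zeta_i.d$, and the energy-balance cap on recharge slots --- goes beyond what the paper writes. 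That extra machinery is not wasted: it makes explicit that ``efficiency'' counts recharge steps as useful time, a point the paper's proof silently folds into ``number of working loops.''

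The price of the more ambitious framework is that you take on obligations the paper never discharges and you do not either. Two of them are genuine gaps in your write-up. First, the exchange argument for cross-worker coupling is announced in your last paragraph but never carried out; as written, the claim $k_i^* \le k_i$ rests on phase~1 maximizing each worker's loop count \emph{individually}, whereas phase~1 only minimizes the \emph{joint} wait $\mathcal{W}$, so a rival plan could in principle trade one worker's loop for another's. Second, your statement that ``no plan can finish its last loop strictly later than $\zeta_i.\tau$ and simultaneously raise its loop count'' does not follow from the hypothesis alone: a rival plan might route the rechargers so that $r_i$ completes its $k_i$-th loop at some $t < \zeta_i.\tau$ with $T' - t \ge |L_i|$, and the hypothesis only bounds $T' - \zeta_i.\tau$, not $T'$ minus the earliest achievable completion time. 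The paper's proof is exposed to the same objections but simply asserts the two facts; if you keep your finer-grained accounting, you must actually supply the exchange argument rather than defer it.
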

\begin{proof}
In the first phase of the algorithm, we ensure that any worker robot $r_i$ traverses its working loop for the maximal number of times. 
The robot $r_i$ stops its operation at  time $\zeta_i.\tau \le T$. If the length of the working loop $L_i$ is strictly greater than $T'- \zeta_i.\tau$ for any robot $r_i$, it is not possible to include any more working loop for any of the robots in the extended hypercycle of length $T'$. This ensures the maximal working efficiency of the generated plan.
\end{proof}

\shortversion{
Our algorithm generates the trajectories of the mobile rechargers based on the assumption that the workers and the rechargers move in lock steps. In the full version~\cite{KunduS21}, we discuss how the delay uncertainty in the movement of the robots can be address while implementing our algorithm for a real multi-robot system.
}

\longversion {
\subsection{Mechanism for Dealing with Delay Uncertainty}
Our algorithm generates the trajectories of the mobile rechargers based on the assumption that the workers and the rechargers move in lock steps. However, in reality, the robots cannot move synchronously due to the delay uncertainty in their motion.
To deal with this delay uncertainty, we can employ the following measures during the execution of the statically computed plan.
We assume that the workers and rechargers are fully aware of the recharge plan. They can utilize their knowledge about the recharge plan to deal with any run-time anomaly due to delay uncertainty.
First, at any recharge point, both the worker and the recharger wait if the other robot does not arrive at the designated location at the anticipated time.
A worker and a recharger do not leave the recharge location before the recharge is initiated and completed successfully.
Second, to ensure that the generated trajectories do not become useless after the execution of a few hypercycles, a synchronization event can be conducted after the completion of each hypercycle. 
Once a recharger reaches its initial location, it broadcasts a \emph{sync} message to all other robots (both workers and other rechargers). 
When a robot (either worker or recharger) receives exactly $|C|$ \emph{sync} messages, it starts its operation for the next hypercycle. We assume that the communication is reliable and requires negligible time.
}

\subsection{A Greedy Algorithm}
\label{sec-greedy}

To demonstrate that our SMT-based algorithm is indeed essential for obtaining a superior solution to the mobile recharger path planning problem, we design a baseline greedy algorithm for the sake of comparison. In this algorithm, we ensure that when a recharger becomes available, it moves towards a location where it will get the opportunity to recharge an energy-deficient robot at the earliest, i.e., it moves to the nearest charge-deficient worker.

At any time step $t$,  for each worker $r \in R$ and for each recharger $c \in C$, we compute $\lambda_{cr}$ that captures the duration after which recharger $c$ can start recharging robot $r$.
There could be the following two cases: (i) If worker $r$ is already devoid of charge and has become stationary, then $\lambda_{cr}$ gives the time required for an available recharger $c$ to move to the current location of worker $r$.
(ii) If robot $r$ is currently (at time $t$) moving, then $\lambda_{cr}$ represents the \emph{maximum} of the following: (a) time required for robot $r$ to become energy deficient, and (b) time required for recharger $c$ to reach the final location of robot $r$.
Once we compute $\lambda_{cr}$ for all $c \in C_t$ and all $r\in R$, we choose the $\langle r^*, c^* \rangle$ pair for which $\lambda_{cr}$ is the minimum.
\shortversion{
We implement the greedy strategy over a hypercycle like we do in our SMT-based approach. 
}

\longversion{
We implement the greedy strategy over a hypercycle like we do in our SMT-based approach. 
This greedy algorithm can be used to design an online mechanism for mobile recharging as well.
However, the online implementation requires an infrastructure for keeping track of the state of the system and running the scheduling algorithm at appropriate time instants.
}

\section{Evaluation}
\label{sec-expr}


\subsection{Experimental Setup}

\begin{figure}[t!]
{
\begin{center}
\subfigure[]{\resizebox{0.24\linewidth}{!} { \includegraphics[scale=1]{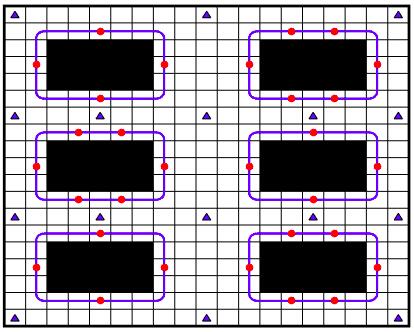}}}
\subfigure[]{\resizebox{0.24\linewidth}{!} {\includegraphics[scale=1]{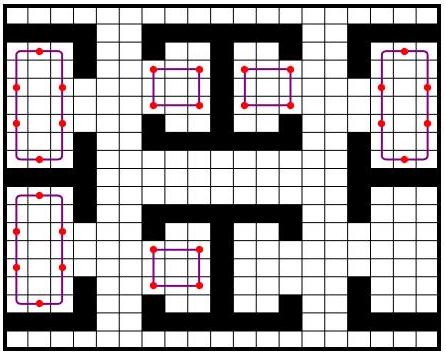}}}
\subfigure[]{\resizebox{0.236\linewidth}{!} {\includegraphics[scale=1]{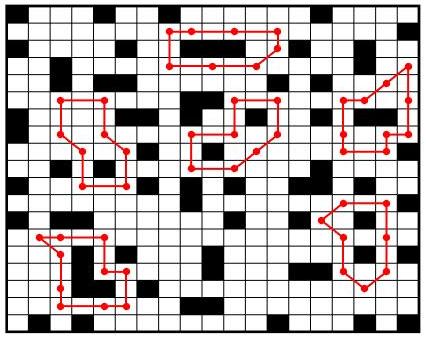}}}
\subfigure[]{\resizebox{0.236\linewidth}{!} {\includegraphics[scale=1]{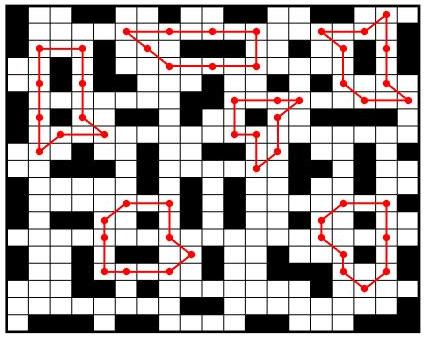}}}
\end{center}
}
\caption{{\small Workspaces for experiments: (a) Warehouse, (b) Artificial floor, (c) Random with 20\% obstacles, (d) Random with 30\% obstacles. Loops represent working loops of the worker robots. Red dots represent the way-points of worker's trajectory. Blue triangles in (a) show the potential initial locations for the rechargers in the workspace.}
}
\label{fig:workspaces}
\end{figure}

The workspaces ($19\times 19$ dimension) used for our experiments are shown in Figure~\ref{fig:workspaces}.
We consider all the robots to follow motion primitives of differential-drive robots like Turtlebot~\cite{turtlebot}.
If fully recharged, half of the workers go out of charge in $10$ time steps, and the remaining workers go out of charge in $12$ time steps. Note that each step may correspond to covering a long distance. The value of $\delta_{max}$ (the maximum amount of recharge per time unit) is chosen as $10$ units.
The bar plots representing the efficiency are divided into two parts -- the lower part gives the efficiency due to movement (work), and the upper one represents the efficiency contribution due to getting recharged (recharge). 
If a bar shows a value of $x\%$, it implies that $(100-x)\%$ of the time the worker has spent in waiting idly for meeting a recharger.
For all our experiments, the timeout is set to $3\si{\hour}$.

Our experiments were carried out in a system with i7-6500U CPU @ 2.50GHz and 16 GB RAM. We use Z3~\cite{Z3} as the back-end SMT solver. 
We have also carried out the experiments using Gurobi optimizer~\cite{gurobi} by modeling our problem as an appropriate Integer Linear Programming problem. However, in our experiments, Z3 consistently outperformed Gurobi in terms of computation time. Thus, we present our results using Z3 as the back-end solver only.
In our SMT encoding of the one-shot algorithm,
we have used the waiting time of the workers as the primary objective and the trajectory cost of the recharger as the secondary objective.
\shortversion{
We submit a video demonstrating the Two-shot algorithm as supplementary materials.
The implementation of our one-shot and two-shot algorithms are available at
{\color{blue}\url{https://www.dropbox.com/sh/zh43irn5sblwh2g/AACSS6rXxpRSsKpiF0XRnzAya?dl=0}}.
}

\subsection{Results}
\subsubsection{One-shot vs Two-shot algorithms}
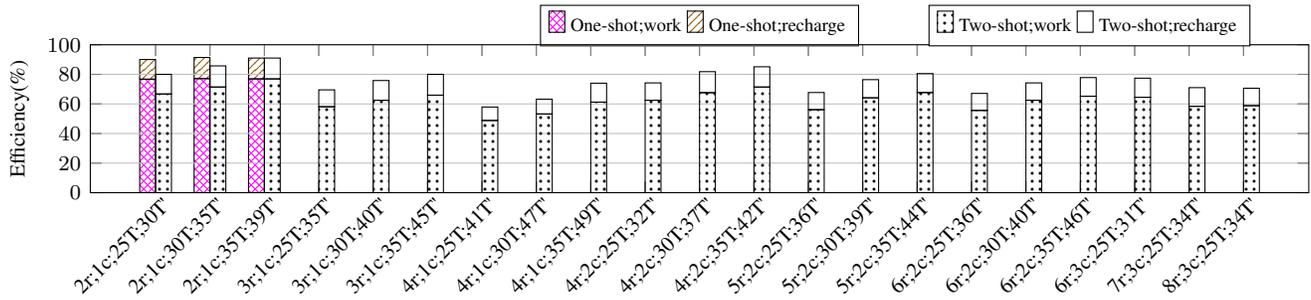
\begin{figure*}
\begin{tikzpicture}
\pgfplotsset{every tick label/.append style={font=\footnotesize}}
\begin{axis}[
    ybar stacked,
    bar shift=-3pt,
	bar width=6pt,
	axis line style={white},
	width  = \textwidth,
	height = 3.55cm,
    legend style={at={(0.5,1.27)},
    anchor=north,legend columns=-1},
    ylabel={\footnotesize Efficiency(\%)},
    symbolic x coords={2r;1c;25T;30T\textquotesingle, 2r;1c;30T;35T\textquotesingle, 2r;1c;35T;39T\textquotesingle,     3r;1c;25T;35T\textquotesingle, 3r;1c;30T;40T\textquotesingle, 3r;1c;35T;45T\textquotesingle,     4r;1c;25T;41T\textquotesingle, 4r;1c;30T;47T\textquotesingle, 4r;1c;35T;49T\textquotesingle,     4r;2c;25T;32T\textquotesingle, 4r;2c;30T;37T\textquotesingle, 4r;2c;35T;42T\textquotesingle,     5r;2c;25T;36T\textquotesingle, 5r;2c;30T;39T\textquotesingle, 5r;2c;35T;44T\textquotesingle,    6r;2c;25T;36T\textquotesingle, 6r;2c;30T;40T\textquotesingle, 6r;2c;35T;46T\textquotesingle,     6r;3c;25T;31T\textquotesingle, 7r;3c;25T;34T\textquotesingle, 8r;3c;25T;34T\textquotesingle},
    xtick=data,
    x tick label style={color=black, rotate=45, anchor=east, align=center, xshift=0.15cm, yshift=-0.2cm},
    ymin=0, ymax=100,
    enlarge x limits = 0.06,
    ]
    
    \addplot+[ybar, draw=black, pattern color=magenta, pattern=crosshatch] plot coordinates {(2r;1c;25T;30T\textquotesingle, 76.7) (2r;1c;30T;35T\textquotesingle, 77.1) (2r;1c;35T;39T\textquotesingle, 76.9)    (3r;1c;25T;35T\textquotesingle, 0.0) (3r;1c;30T;40T\textquotesingle, 0.0) (3r;1c;35T;45T\textquotesingle, 0.0)    (4r;1c;25T;41T\textquotesingle, 0.0) (4r;1c;30T;47T\textquotesingle, 0.0) (4r;1c;35T;49T\textquotesingle, 0.0)    (4r;2c;25T;32T\textquotesingle, 0.0) (4r;2c;30T;37T\textquotesingle, 0.0) (4r;2c;35T;42T\textquotesingle, 0.0)    (5r;2c;25T;36T\textquotesingle, 0.0) (5r;2c;30T;39T\textquotesingle, 0.0) (5r;2c;35T;44T\textquotesingle, 0.0)    (6r;2c;25T;36T\textquotesingle, 0.0) (6r;2c;30T;40T\textquotesingle, 0.0) (6r;2c;35T;46T\textquotesingle, 0.0)     (6r;3c;25T;31T\textquotesingle, 0.0) (7r;3c;25T;34T\textquotesingle, 0.0) (8r;3c;25T;34T\textquotesingle, 0.0)};
    \addplot+[ybar, draw=black, pattern color=yyyyellow, pattern=north east lines] plot coordinates {(2r;1c;25T;30T\textquotesingle, 13.3) (2r;1c;30T;35T\textquotesingle, 14.3) (2r;1c;35T;39T\textquotesingle, 14.1)    (3r;1c;25T;35T\textquotesingle, 0.0) (3r;1c;30T;40T\textquotesingle, 0.0) (3r;1c;35T;45T\textquotesingle, 0.0)    (4r;1c;25T;41T\textquotesingle, 0.0) (4r;1c;30T;47T\textquotesingle, 0.0) (4r;1c;35T;49T\textquotesingle, 0.0)    (4r;2c;25T;32T\textquotesingle, 0.0) (4r;2c;30T;37T\textquotesingle, 0.0) (4r;2c;35T;42T\textquotesingle, 0.0)    (5r;2c;25T;36T\textquotesingle, 0.0) (5r;2c;30T;39T\textquotesingle, 0.0) (5r;2c;35T;44T\textquotesingle, 0.0)    (6r;2c;25T;36T\textquotesingle, 0.0) (6r;2c;30T;40T\textquotesingle, 0.0) (6r;2c;35T;46T\textquotesingle, 0.0)     (6r;3c;25T;31T\textquotesingle, 0.0) (7r;3c;25T;34T\textquotesingle, 0.0) (8r;3c;25T;34T\textquotesingle, 0.0)};
    \legend{\scriptsize One-shot;work, \scriptsize One-shot;recharge}
\end{axis}

\begin{axis}[
    ybar stacked, 
    bar shift=3pt,
	bar width=6pt,
	axis line style={black},
	width  = \textwidth,
	height = 3.55cm,
    legend style={at={(0.82,1.27)},
    anchor=north,legend columns=-1},
    symbolic x coords={2r;1c;25T;30T\textquotesingle, 2r;1c;30T;35T\textquotesingle, 2r;1c;35T;39T\textquotesingle,     3r;1c;25T;35T\textquotesingle, 3r;1c;30T;40T\textquotesingle, 3r;1c;35T;45T\textquotesingle,     4r;1c;25T;41T\textquotesingle, 4r;1c;30T;47T\textquotesingle, 4r;1c;35T;49T\textquotesingle,     4r;2c;25T;32T\textquotesingle, 4r;2c;30T;37T\textquotesingle, 4r;2c;35T;42T\textquotesingle,     5r;2c;25T;36T\textquotesingle, 5r;2c;30T;39T\textquotesingle, 5r;2c;35T;44T\textquotesingle,    6r;2c;25T;36T\textquotesingle, 6r;2c;30T;40T\textquotesingle, 6r;2c;35T;46T\textquotesingle,     6r;3c;25T;31T\textquotesingle, 7r;3c;25T;34T\textquotesingle, 8r;3c;25T;34T\textquotesingle},
    xtick=data,
    ymin=0, ymax=100,
    ymajorgrids=true,
    xticklabels=\empty,
    yticklabels=\empty,
    enlarge x limits = 0.06,
    ]
        
    \addplot+[ybar, draw=black, pattern color=black, pattern=dots] plot coordinates {(2r;1c;25T;30T\textquotesingle, 66.7) (2r;1c;30T;35T\textquotesingle, 71.4) (2r;1c;35T;39T\textquotesingle, 76.9)     (3r;1c;25T;35T\textquotesingle, 58.1) (3r;1c;30T;40T\textquotesingle, 62.5) (3r;1c;35T;45T\textquotesingle, 65.9)      (4r;1c;25T;41T\textquotesingle, 48.8) (4r;1c;30T;47T\textquotesingle, 53.2) (4r;1c;35T;49T\textquotesingle, 61.2)    (4r;2c;25T;32T\textquotesingle, 62.5) (4r;2c;30T;37T\textquotesingle, 67.6) (4r;2c;35T;42T\textquotesingle, 71.4)    (5r;2c;25T;36T\textquotesingle, 56.1) (5r;2c;30T;39T\textquotesingle, 64.1) (5r;2c;35T;44T\textquotesingle, 67.7)    (6r;2c;25T;36T\textquotesingle, 55.6) (6r;2c;30T;40T\textquotesingle, 62.5) (6r;2c;35T;46T\textquotesingle, 65.2)     (6r;3c;25T;31T\textquotesingle, 64.5) (7r;3c;25T;34T\textquotesingle, 58.4) (8r;3c;25T;34T\textquotesingle, 58.8)};
    \addplot+[ybar, draw=black, pattern color=black, pattern=none] plot coordinates {(2r;1c;25T;30T\textquotesingle, 13.3) (2r;1c;30T;35T\textquotesingle, 14.3) (2r;1c;35T;39T\textquotesingle, 14.1)     (3r;1c;25T;35T\textquotesingle, 11.4) (3r;1c;30T;40T\textquotesingle, 13.3) (3r;1c;35T;45T\textquotesingle, 14.1)      (4r;1c;25T;41T\textquotesingle, 9.10) (4r;1c;30T;47T\textquotesingle, 10.0) (4r;1c;35T;49T\textquotesingle, 12.8)    (4r;2c;25T;32T\textquotesingle, 11.7) (4r;2c;30T;37T\textquotesingle, 14.2) (4r;2c;35T;42T\textquotesingle, 13.7)    (5r;2c;25T;36T\textquotesingle, 11.6) (5r;2c;30T;39T\textquotesingle, 12.3) (5r;2c;35T;44T\textquotesingle, 12.8)    (6r;2c;25T;36T\textquotesingle, 11.5) (6r;2c;30T;40T\textquotesingle, 11.7) (6r;2c;35T;46T\textquotesingle, 12.6)     (6r;3c;25T;31T\textquotesingle, 12.9) (7r;3c;25T;34T\textquotesingle, 12.6) (8r;3c;25T;34T\textquotesingle, 11.8)};
    \legend{\scriptsize Two-shot;work, \scriptsize Two-shot;recharge}
\end{axis}

\end{tikzpicture}
\caption{Efficiency comparison: One-shot vs Two-shot approach, for different \textit{workers-rechargers-orig.hypercycle-ext.hypercycle} ($|R|;|C|;T;T'$) combinations and Warehouse workspace. Each stacked bar is divided into work (lower) and recharge (upper) parts. For One-shot, in some cases, absence of bars implies timeout.}
\label{bar-onevstwoshot-effic}
\end{figure*}
\begin{figure} [t]
\begin{tikzpicture}
\pgfplotsset{every tick label/.append style={font=\footnotesize}}

\begin{semilogyaxis}[
restrict expr to domain={y}{1:1e4},
unbounded coords=discard,
x tick label style={
/pgf/number format/1000 sep=},
width  = 0.47*\textwidth,
height = 3.55cm,
ylabel={\footnotesize Time (minutes)},
xtick=data,
legend pos=north east,
ymajorgrids=true,
grid style=dashed,
enlargelimits=0.15,
enlarge x limits=0.11,
symbolic x coords={2r;1c, 3r;1c, 4r;1c, 4r;2c, 5r;2c, 6r;2c},
xtick = data,
legend style={at={(0.33,1.0)}, anchor=north,legend columns=-1},
]
 \addplot[mark=triangle*]
 coordinates {(2r;1c, 2.0) (3r;1c, 3.0) (4r;1c, 7.0) (4r;2c, 18.25) (5r;2c, 41.7) (6r;2c, 47.5)};

 \addplot[mark=*]
 coordinates {(2r;1c, 3.0) (3r;1c, 5.7) (4r;1c, 14.25) (4r;2c, 41.0) (5r;2c, 73.8) (6r;2c, 153.2)};

 \addplot[mark=square*]
 coordinates {(2r;1c, 4.4) (3r;1c, 10.7) (4r;1c,16.8) (4r;2c, 65.3) (5r;2c, 131.2) (6r;2c, 241.4)}; 

\legend{\scriptsize{T=25}, \scriptsize{T=30}, \scriptsize{T=35}}
\end{semilogyaxis}
\end{tikzpicture}
\caption{Computation time: Two-shot algorithm, for different \textit{workers-rechargers} combinations ($|R|;|C|$) and  hypercycle lengths (T) in warehouse workspace.}
\label{bar-twoshot-time}
\end{figure}
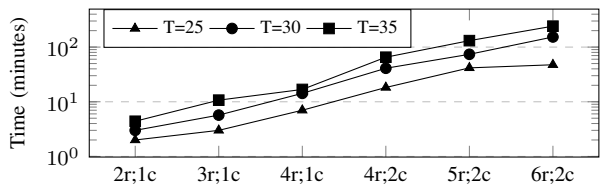
\begin{figure}
\begin{center}
\begin{tikzpicture}
\pgfplotsset{every tick label/.append style={font=\footnotesize}}

\pgfplotstableread{
76.0 14.0
0.0 0.0
0.0 0.0

0.0 0.0
0.0 0.0
0.0 0.0

0.0 0.0
0.0 0.0
0.0 0.0

0.0 0.0
0.0 0.0
0.0 0.0

0.0 0.0
0.0 0.0
0.0 0.0

0.0 0.0
0.0 0.0
0.0 0.0
}\oneshot

\pgfplotstableread{
69.2 13.5
53.5 11.0
50.0 9.8

64.5 14.5
56.1 11.1
52.6 10.1

55.4 11.6
52.7 11.3
48.4 10.2

57.8 12.5
54.7 15.9
50.9 10.2

45.2 8.6
38.0 7.3
41.1 8.3

59.7 12.9
51.7 10.0
50.9 10.7



}\twoshotpreemp

\begin{axis}[ 
    ybar stacked, bar shift=-1.77pt,
	axis line style={white},
    nodes near coords align={vertical},
    every node near coord/.append style={font=\footnotesize, fill=white, yshift=0.5mm},
    enlarge y limits={lower, value=0.1},
    enlarge y limits={upper, value=0.22},
    bar width=3.5pt,    
    xtick=data,
    ymin = 0, ymax=85,
    xticklabels={ 
    4r;2c, 5r;2c, 6r;2c,
    4r;2c, 5r;2c, 6r;2c,
    4r;2c, 5r;2c, 6r;2c,
    4r;2c, 5r;2c, 6r;2c,
    4r;2c, 5r;2c, 6r;2c,
    4r;2c, 5r;2c, 6r;2c,
    },
    legend style={at={(0.23, 1.29)}, anchor=north, legend columns=3},
    x tick label style={color=white, align=center, yshift=0.0cm},
    y tick label style={color=white},
    enlarge x limits=0.07,    
    width=9.0cm,
    height=3.45cm,
]
    \pgfplotsinvokeforeach {0,...,0}{  
        \addplot+[ybar, draw=black, pattern color=magenta, pattern=crosshatch] table [x expr={\coordindex-mod(\coordindex,3)/3}, y index=#1] {\oneshot}; 
    }
    \pgfplotsinvokeforeach {1,...,1}{  
        \addplot+[ybar, draw=black, pattern color=black, pattern=north east lines] table [x expr={\coordindex-mod(\coordindex,3)/3}, y index=#1] {\oneshot}; 
    }    
    \legend{\scriptsize 1-shot;work, \scriptsize 1-shot;rech}
\end{axis}    

\begin{axis}[ 
    ybar stacked, bar shift=1.75pt,
	axis line style={black},
    ylabel={\footnotesize Efficiency(\%)},
    nodes near coords align={vertical},
    every node near coord/.append style={font=\footnotesize, fill=white, yshift=0.5mm},
    enlarge y limits={lower, value=0.1},
    enlarge y limits={upper, value=0.22},
    bar width=3.5pt,    
    xtick=data,
    ymin = 0, ymax=85,
    xticklabel style={font=\scriptsize},
    xticklabels={ 
    4r;2c, 5r;2c, 6r;2c,
    4r;2c, 5r;2c, 6r;2c,
    4r;2c, 5r;2c, 6r;2c,
    4r;2c, 5r;2c, 6r;2c,
    4r;2c, 5r;2c, 6r;2c,
    4r;2c, 5r;2c, 6r;2c,    
    },
    legend style={at={(0.77, 1.29)}, anchor=north, legend columns=3},
    x tick label style={color=black, rotate=45, anchor=east, align=center, xshift=0.1cm, yshift=-0.2cm},
    y tick label style={color=black},
    enlarge x limits=0.07,    
    width=9.0cm,
    height=3.45cm,
]
    \pgfplotsinvokeforeach {0,...,0}{  
        \addplot+[ybar, draw=black, pattern color=black, pattern=dots] table [x expr={\coordindex-mod(\coordindex,3)/3}, y index=#1] {\twoshotpreemp}; 
    }
    \pgfplotsinvokeforeach {1,...,1}{  
        \addplot+[ybar, draw=black, pattern color=black, pattern=none] table [x expr={\coordindex-mod(\coordindex,3)/3}, y index=#1] {\twoshotpreemp}; 
    }
    \draw[very thick] (axis cs:5.35,0) -- ({axis cs:5.35,0}|-{rel axis cs:0.5,1});
    \draw[very thick] (axis cs:11.35,0) -- ({axis cs:11.35,0}|-{rel axis cs:0.5,1});
    
    \draw[very thin] (axis cs:2.32,0) -- ({axis cs:2.32,0}|-{rel axis cs:0.5,1});
    \draw[very thin] (axis cs:8.35,0) -- ({axis cs:8.35,0}|-{rel axis cs:0.5,1});
    \draw[very thin] (axis cs:14.35,0) -- ({axis cs:14.35,0}|-{rel axis cs:0.5,1});
  
    \legend{\scriptsize 2-shot;work, \scriptsize 2-shot;rech}
\end{axis}

\node [align=center,
    text width=4cm, inner sep=0.25cm] at (1.22cm, 1.72cm) {\textsc{\footnotesize{Art. floor}}};
\node [align=center,
    text width=4cm, inner sep=0.25cm] at (3.62cm, 1.72cm) {\textsc{\footnotesize{Random-20}}};
\node [align=center,
    text width=4cm, inner sep=0.25cm] at (6.08cm, 1.72cm) {\textsc{\footnotesize{Random-30}}};

\node [align=center,
    text width=1.5cm, inner sep=0.25cm] at (1.0cm, 1.42cm) {\textsc{\footnotesize{$T$=$20$}}};
\node [align=center,
    text width=1.5cm, inner sep=0.25cm] at (2.15cm, 1.42cm) {\textsc{\footnotesize{$T$=$25$}}};

\node [align=center,
    text width=1.5cm, inner sep=0.25cm] at (3.3cm, 1.42cm) {\textsc{\footnotesize{$T$=$20$}}};
\node [align=center,
    text width=1.5cm, inner sep=0.25cm] at (4.5cm, 1.42cm) {\textsc{\footnotesize{$T$=$25$}}};
    
\node [align=center,
    text width=1.5cm, inner sep=0.25cm] at (5.6cm, 1.42cm) {\textsc{\footnotesize{$T$=$20$}}};
\node [align=center,
    text width=1.5cm, inner sep=0.25cm] at (6.9cm, 1.42cm) {\textsc{\footnotesize{$T$=$25$}}};
\end{tikzpicture}
\end{center}
\caption{Efficiency comparison: One-shot vs Two-shot approach, for different types of workspaces -- Artificial floor, Random-20 and Random-30 with $T=20$ and $T=25$. Each stacked bar is divided into work and recharge parts. For One-shot, in most cases, absence of bars implies timeout ($3\si{\hour}$).}
\label{bar-effic-worksp-algo}
\end{figure}
In Figure~\ref{bar-onevstwoshot-effic}, we compare the efficiency of the one-shot and the two-shot algorithms for Warehouse workspace. The one-shot approach does not have a concept of extended hypercycle ($T'$). However, the comparison needs to be done for the same length of hypercycles. Therefore, first, we execute the two-shot algorithm with some original hypercycle ($T$) and obtain the length of the extended hypercycle ($T'$). Subsequently, we use the derived $T'$ as the hypercycle length for the one-shot approach. 
The experiments are carried out for original hypercycle length $25$, $30$, and $35$ for up to $6$ workers and $2$ rechargers, and for original hypercycle length $25$ for $6$-$8$ workers and $3$ rechargers.
In the figure, the label on the $x$-axis 2r;1c;25T;30T\textquotesingle denotes $2$ workers, $1$ recharger, original hypercycle length $25$, and extended hypercycle length $30$.

For some smaller instances, the one-shot algorithm is able to produce recharge plans and provides better efficiency than its two-shot counterpart. Let us examine the reason.
Consider the instance $2r;1c;30T;35T'$ in Figure~\ref{bar-onevstwoshot-effic}. For one of the worker robots, the one-shot algorithm generates a recharge plan with $4$ working loops, whereas the two-shot algorithm allows only $3$ working loops. The reason behind this observation is that one of the workers completes its last working loop at time point $28$ in case of the one-shot algorithm. However, in the two-shot approach, the workers are not allowed to move after the $25$-th time point, as the original hypercycle length is $25$.
Therefore, one-shot gives better efficiency for this instance.
However, one-shot algorithm \textit{times out} ($3 \si{\hour}$) in most of the cases, whereas two-shot algorithm scales well for larger input instances. The computation times for the two-shot algorithm in warehouse workspace for up to $6$ workers and $2$ rechargers for original hypercycle length $25$, $30$,  and $35$ are shown in Figure~\ref{bar-twoshot-time}.

As evident from Figure~\ref{bar-onevstwoshot-effic}, with fixed $|R|$ and $|C|$, the efficiency increases with increasing $T$. With fixed $|R|$ and $T$, the efficiency increases with increasing $|C|$.

In Figure~\ref{bar-effic-worksp-algo}, we compare one shot and two shot approaches for other more complex workspaces -- Artificial floor, Random-20, and Random-30 (workspaces  are shown in Figure~\ref{fig:workspaces}). A similar trend, as seen in Figure~\ref{bar-onevstwoshot-effic} for Warehouse, can also be seen for other workspaces in Figure~\ref{bar-effic-worksp-algo}. One-shot algorithm times out for most of the input instances.



\subsubsection{Comparison with the Greedy algorithm}
We compare our SMT-based two-shot algorithm with the greedy algorithm presented in Section~\ref{sec-greedy}. Comparison results for $2-6$ workers and $1-2$ rechargers with hypercycle length $T=30$ and $T=35$ are shown in Figure~\ref{fig:optimizationGraph-ncs}.
As seen from the figure, our SMT-based two-shot algorithm achieves an improvement of $13$-$44\%$ in efficiency over the greedy algorithm, with an average $\%$-improvement of $27.5\%$. 
In the SMT-based two-shot algorithm, the possibility of partial recharging and synthesis of initial locations of rechargers helps us achieve better efficiency. 


\begin{figure}[t]
\centering
{
\subfigure[$T=30$]{
\centering
\resizebox{0.47\linewidth}{!}
{
\begin{tikzpicture}
\pgfplotsset{every tick label/.append style={font=\LARGE}}
\begin{axis}[
	ylabel=\LARGE{Efficiency(\%)},
    symbolic x coords={2r;1c, 3r;1c, 4r;1c, 4r;2c, 5r;2c, 6r;2c},
    xtick=data,	
	enlargelimits=0.05,
	legend style={at={(0.5,1.14)},
	anchor=north,legend columns=-1},
	ymax=90, ymin=40,
    x tick label style={color=black, rotate=45, anchor=east, align=center, yshift=0.0cm},
    ymin=40, ymax=90,
    height=8cm,
    width=11cm,
]
\addplot
  coordinates { (2r;1c, 75.8) (3r;1c, 53.0) (4r;1c, 46.3) (4r;2c, 68.6) (5r;2c, 60.0) (6r;2c, 57.5)};
  \addlegendentry{\LARGE{Greedy}}
  
\addplot
  coordinates { (2r;1c, 85.7) (3r;1c, 75.8) (4r;1c, 63.3) (4r;2c, 81.8) (5r;2c, 76.4) (6r;2c, 74.2)};
  \addlegendentry{\LARGE{SMT 2-shot}}
\end{axis}
\end{tikzpicture}
}
}
\centering
\subfigure[$T=35$]{
\resizebox{0.47\linewidth}{!}
{
\begin{tikzpicture}
\pgfplotsset{every tick label/.append style={font=\LARGE}}
\begin{axis}[
	ylabel=\LARGE{Efficiency(\%)},
    symbolic x coords={2r;1c, 3r;1c, 4r;1c, 4r;2c, 5r;2c, 6r;2c},
	enlargelimits=0.05,
	legend style={at={(0.5,1.14)},
	anchor=north,legend columns=-1},
    x tick label style={color=black, rotate=45, anchor=east, align=center, yshift=0.0cm},
    ymin=40, ymax=90,
    height=8cm,
    width=11cm,
]
\addplot
  coordinates { (2r;1c, 79.5) (3r;1c, 60.0) (4r;1c, 48.6) (4r;2c, 68.8) (5r;2c, 67.3) (6r;2c, 61.7)};
  \addlegendentry{\LARGE{Greedy}}
  
\addplot
  coordinates { (2r;1c, 91.0) (3r;1c, 80.0) (4r;1c, 70.0) (4r;2c, 85.1) (5r;2c, 80.5) (6r;2c, 77.9)};
  \addlegendentry{\LARGE{SMT 2-shot}}
\end{axis}
\end{tikzpicture}
}}
}
\caption{Efficency comparison: SMT-based Two-shot vs Greedy approach, for different \textit{workers-rechargers} combinations and hypercycle lengths ($T$) in warehouse workspace.}
\label{fig:optimizationGraph-ncs}
\end{figure}
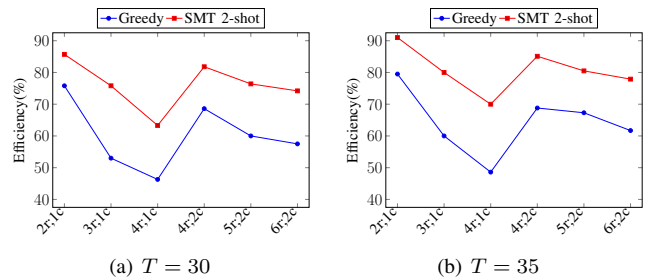


\shortversion {
We illustrate 
 the trajectories generated by the greedy algorithm and the SMT-based two-shot algorithm for a given instance in 
 in the full version of the paper~\cite{KunduS21}. 
 }
 
 \longversion {
 Figure~\ref{fig:greedy-vs-smt} (on the next page) shows the comparison between the greedy algorithm and the SMT-based two-shot algorithm for a given input instance ($4$ workers, $2$ rechargers, and $T=35$ in a Warehouse workspace). The efficiencies are $68.9\%$ and $81.5\%$ for the greedy approach and the SMT-based two-shot approach, respectively, which is an improvement of $18\%$ over the greedy approach. 
As Figure~\ref{fig:greedy-vs-smt} reveals, the SMT-based algorithm provides better performance for the following reasons.

\textit{i) The possibility of partial recharging in the SMT based approach results in the reduction in` the idle time for the worker robots.} 
In the case of the SMT-based approach, recharging is scheduled even if a worker is not fully out of charge. 
In many cases, partial recharging helps us gain performance improvement. 
Partial recharging happens when a worker is recharged for less number of time instants than what it actually needs to fully recharge its battery. Because of this, worker-2 and worker-4 traverses $4$ loops each with the greedy approach, whereas they traverse $5$ loops each with the SMT based two-shot approach. This enhances the efficiency of the SMT two-shot approach. Both worker-1 and worker-3 traverse $7$ loops each for both the approaches. 

\textit{ii) Synthesis of rechargers\textquotesingle \ initial locations in the SMT-based approach results in less time for the rechargers to go back to their initial states.} 
In the greedy approach, rechargers\textquotesingle\ initial locations are decided arbitrarily. Therefore, it takes a longer time for the rechargers to get back to their respective initial locations for state matching at the end. In Figure~\ref{fig:greedy-vs-smt}, length of the extended hypercycle for greedy approach is $45$. Had these initial locations been chosen wisely, overall time could have been saved. Our SMT-based two-shot algorithm synthesizes the initial locations of the rechargers, which leads to more convenient initial locations, and enables the rechargers to go back to their initial locations in comparatively lesser time. In Figure~\ref{fig:greedy-vs-smt}, the length of the extended hypercycle for the SMT-based two-shot algorithm is $42$ for the chosen input instance.

\begin{figure*}[t]
\begin{center}
\includegraphics[scale=0.88]{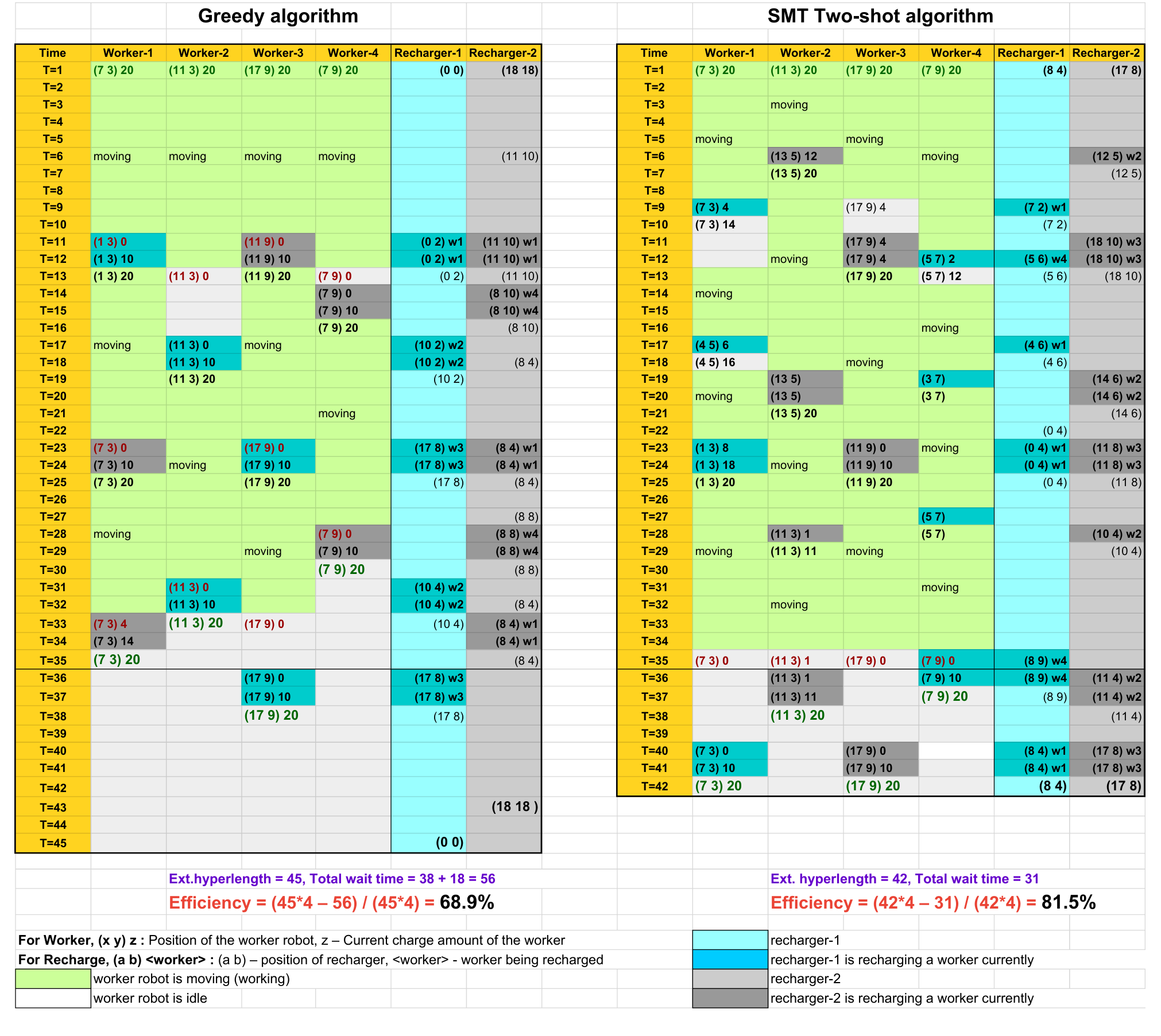}
\caption{Comparison of the Greedy algorithm and the SMT Two-shot algorithm for a given input instance (4 workers, 2 rechargers, $T=35$ and Warehouse workspace.)}
\label{fig:greedy-vs-smt}
\end{center}
\end{figure*}
}

\subsubsection{Effect of rechargers' potential initial locations ($P$)}

\begin{figure}
\begin{center}
\begin{tikzpicture}
\pgfplotsset{every tick label/.append style={font=\footnotesize}}

\pgfplotstableread{
55.0 11.7
62.0 12.1
64.3 13.4
64.3 12.5
66.7 12.9
}\fourworkers

\pgfplotstableread{
55.6 11.3
55.6 11.9
57.4 11.6
59.3 12.0
61.4 12.4
}\fiveworkers

\pgfplotstableread{
52.9 10.3
56.3 10.9
58.1 11.3
58.1 11.3
58.1 11.3
}\sixworkers

\begin{axis}[ 
    ybar stacked, bar shift=-5.05pt,
	width  = 0.47\textwidth,
	height = 3.4cm,    
	axis line style={white},
    nodes near coords align={vertical},
    every node near coord/.append style={font=\footnotesize, fill=white, yshift=0.5mm},
    enlarge y limits={lower, value=0.1},
    enlarge y limits={upper, value=0.22},
    bar width=5pt,    
    xtick=data,
    ymin=0, ymax=75,
    xticklabels={ 
    4, 8, 12, 16, All
    },
    legend style={at={(0.14, 1.30)}, anchor=north, legend columns=3},
    x tick label style={color=white, align=center, yshift=-0.2cm},
    y tick label style={color=white},
    enlarge x limits=0.12,    
]
    \pgfplotsinvokeforeach {0,...,0}{  
       \addplot+[style={black,fill=yyellow,mark=none}] table [x expr={\coordindex}, y index=#1] {\fourworkers};
    }
    \pgfplotsinvokeforeach {1,...,1}{  
       \addplot+[style={black,fill=ggreen,mark=none}] table [x expr={\coordindex}, y index=#1] {\fourworkers};
    }    
    \legend{\scriptsize 4r;work, \scriptsize 4r;rech}
\end{axis}

\begin{axis}[ 
    ybar stacked, bar shift=0pt,
	width  = 0.47\textwidth,
	height = 3.4cm,    
	axis line style={white},
    nodes near coords align={vertical},
    every node near coord/.append style={font=\footnotesize, fill=white, yshift=0.5mm},
    enlarge y limits={lower, value=0.1},
    enlarge y limits={upper, value=0.22},
    bar width=5pt,    
    xtick=data,
    ymin=0, ymax=75,
    xticklabels={ 
    4, 8, 12, 16, All
    },
    legend style={at={(0.50, 1.30)}, anchor=north, legend columns=3},
    x tick label style={color=white, align=center, yshift=-0.2cm},
    y tick label style={color=white},
    enlarge x limits=0.12,    
]
    \pgfplotsinvokeforeach {0,...,0}{  
       \addplot+[style={black,fill=yyyellow,mark=none}] table [x expr={\coordindex}, y index=#1] {\fiveworkers};        
    }
    \pgfplotsinvokeforeach {1,...,1}{  
       \addplot+[style={black,fill=gggreen,mark=none}] table [x expr={\coordindex}, y index=#1] {\fiveworkers};        
    }    
    \legend{\scriptsize 5r;work, \scriptsize 5r;rech}
\end{axis}

\begin{axis}[ 
    ybar stacked, bar shift=5.05pt,
	width  = 0.47\textwidth,
	height = 3.4cm,    
	axis line style={black},
    nodes near coords align={vertical},
    every node near coord/.append style={font=\footnotesize, fill=white, yshift=0.5mm},
    enlarge y limits={lower, value=0.1},
    enlarge y limits={upper, value=0.22},
    bar width=5pt,    
    xtick=data,
    xlabel= {\footnotesize $\#$Potential intial locations for rechargers ($|P|$)},
    ymin=0, ymax=75,
    ylabel={\footnotesize Efficiency(\%)},
    ymajorgrids=true,
    xticklabels={ 
    4, 8, 12, 16, All
    },
    legend style={at={(0.86, 1.30)}, anchor=north, legend columns=3},
    x tick label style={color=black, align=center, yshift=-0.2cm},
    y tick label style={color=black},
    enlarge x limits=0.12,    
]
    \pgfplotsinvokeforeach {0,...,0}{  
        \addplot+[style={black,fill=yyyyellow,mark=none}] table [x expr={\coordindex}, y index=#1] {\sixworkers}; 
    }
    \pgfplotsinvokeforeach {1,...,1}{  
        \addplot+[style={black,fill=ggggreen,mark=none}] table [x expr={\coordindex}, y index=#1] {\sixworkers}; 
    }    
    \legend{\scriptsize 6r;work, \scriptsize 6r;rech}
\end{axis}
\end{tikzpicture}
\end{center}
\caption{Efficiency vs Potential initial locations ($|P|$) for two rechargers and different numbers of workers with $T=20$ in Warehouse workspace. Each stacked bar is divided into work and recharge parts.}
\label{plot-potential}
\end{figure}

Our algorithm synthesizes the initial locations of the rechargers. The number of potential initial locations ($P$) of the rechargers plays a role in determining their starting positions, and hence the trajectories of the rechargers, and the overall efficiency (Figure~\ref{plot-potential}). 
We choose the locations in $P$ symmetrically in the Warehouse workspace as shown in Figure~\ref{fig:workspaces} for $|P|$=$16$. Similarly, we choose the locations for smaller $P$'s by symmetrically removing locations from the larger $P$.
As expected, the efficiency increases with the increase in the size of $P$ up to a certain value and then remains the same.
When we increase the size of set $P$, the computation time goes up because the search space size increases. Computation time varies in the range of $4\si{\minute}$-$6\si{\minute}$, $6\si{\minute}$-$10\si{\minute}$ and $9\si{\minute}$-$14\si{\minute}$ for $4$, $5$ and $6$ worker robots respectively. 


\shortversion {
\subsubsection{Effect of $\mathbf{\delta_{max}}$} Experimental result showing the effect of $\delta_{max}$ (maximum recharge amount per unit time) on the efficiency is provided in
the full version of the paper~\cite{KunduS21}.
}

\longversion {
\subsubsection{Effect of maximum recharge amount per unit time ($\delta_{max}$).}
\label{sec-delta-max}
The duration of a recharge instance depends on $\delta_{max}$.
Generally, higher $\delta_{max}$-value results in higher efficiency (Figure~\ref{plot-deltamax}) because the number of working loops for a worker is generally more with higher $\delta_{max}$ value. However, it may happen that for some sporadic cases, the efficiency drops for a higher $\delta_{max}$ value. The reason is that for a higher $\delta_{max}$ value, the recharge instances are likely to decrease but the number of working loops remains the same (or becomes slightly high), and we consider recharge instances to be useful work and thus contribute to efficiency of the worker.

\begin{figure}[ht] 
\begin{center}
\begin{tikzpicture}
\pgfplotsset{every tick label/.append style={font=\footnotesize}}
\pgfplotstableread{
46.9 20.3
48.4 16.9
53.6 15.2
66.7 12.9
66.7 12.0
}\fourworkers
\pgfplotstableread{
40.5 17.9
44.0 15.4
49.7 14.2
61.4 12.4
61.4 11.0
}\fiveworkers
\pgfplotstableread{
38.5 16.6
41.7 13.9
46.9 12.5
58.1 11.3
58.6 10.4
}\sixworkers
\begin{axis}[ 
    ybar stacked, bar shift=-5.05pt,
	width  = 0.47\textwidth,
	height = 3.4cm,    
	axis line style={white},
    nodes near coords align={vertical},
    every node near coord/.append style={font=\footnotesize, fill=white, yshift=0.5mm},
    enlarge y limits={lower, value=0.1},
    enlarge y limits={upper, value=0.22},
    bar width=5pt,    
    xtick=data,
    ymin=0, ymax=75,
    xticklabels={ 
    4, 6, 8, 10, 14
    },
    legend style={at={(0.14, 1.30)}, anchor=north, legend columns=3},
    x tick label style={color=white, align=center, yshift=-0.2cm},
    y tick label style={color=white},
    enlarge x limits=0.12,    
]
    \pgfplotsinvokeforeach {0,...,0}{  
      \addplot+[style={black,fill=bblue,mark=none}] table [x expr={\coordindex}, y index=#1] {\fourworkers};
    }
    \pgfplotsinvokeforeach {1,...,1}{  
      \addplot+[style={black,fill=yyellow,mark=none}] table [x expr={\coordindex}, y index=#1] {\fourworkers};
    }    
    \legend{\scriptsize 4r;work, \scriptsize 4r;rech}
\end{axis}
\begin{axis}[ 
    ybar stacked, bar shift=0pt,
	width  = 0.47\textwidth,
	height = 3.4cm,    
	axis line style={white},
    nodes near coords align={vertical},
    every node near coord/.append style={font=\footnotesize, fill=white, yshift=0.5mm},
    enlarge y limits={lower, value=0.1},
    enlarge y limits={upper, value=0.22},
    bar width=5pt,    
    xtick=data,
    ymin=0, ymax=75,
    xticklabels={ 
    4, 6, 8, 10, 14
    },
    legend style={at={(0.50, 1.30)}, anchor=north, legend columns=3},
    x tick label style={color=white, align=center, yshift=-0.2cm},
    y tick label style={color=white},
    enlarge x limits=0.12,    
]
    \pgfplotsinvokeforeach {0,...,0}{  
      \addplot+[style={black,fill=bbblue,mark=none}] table [x expr={\coordindex}, y index=#1] {\fiveworkers};        
    }
    \pgfplotsinvokeforeach {1,...,1}{  
      \addplot+[style={black,fill=yyyellow,mark=none}] table [x expr={\coordindex}, y index=#1] {\fiveworkers};        
    }    
    \legend{\scriptsize 5r;work, \scriptsize 5r;rech}
\end{axis}
\begin{axis}[
    ybar stacked, bar shift=5.05pt,
	width  = 0.47\textwidth,
	height = 3.4cm,    
	axis line style={black},
    nodes near coords align={vertical},
    every node near coord/.append style={font=\footnotesize, fill=white, yshift=0.5mm},
    enlarge y limits={lower, value=0.1},
    enlarge y limits={upper, value=0.22},
    bar width=5pt,    
    xtick=data,
    xlabel= {\footnotesize Recharger amount per unit time ($\delta_{max}$)},
    ymin=0, ymax=75,
    ylabel={\footnotesize Efficiency(\%)},
    ymajorgrids=true,
    xticklabels={ 
    4, 6, 8, 10, 14
    },
    legend style={at={(0.86, 1.30)}, anchor=north, legend columns=3},
    x tick label style={color=black, align=center, yshift=-0.2cm},
    y tick label style={color=black},
    enlarge x limits=0.12,    
]
    \pgfplotsinvokeforeach {0,...,0}{  
        \addplot+[style={black,fill=bbbblue,mark=none}] table [x expr={\coordindex}, y index=#1] {\sixworkers}; 
    }
    \pgfplotsinvokeforeach {1,...,1}{  
        \addplot+[style={black,fill=yyyyellow,mark=none}] table [x expr={\coordindex}, y index=#1] {\sixworkers}; 
    }    
    \legend{\scriptsize 6r;work, \scriptsize 6r;rech}
\end{axis}
\end{tikzpicture}
\end{center}
\caption{Efficiency vs Maximum recharge amount per unit time ($\delta_{max}$) for different numbers of workers, with Warehouse workspace and $T=20$. Each stacked bar is divided into work and recharge parts.}
\label{plot-deltamax}
\end{figure}

In Figure~\ref{plot-deltamax}, such exceptions can be found for $4$ workers and $\delta_{max}=4 \text{ and } 6$. As shown in the stacked bar plot, efficiency decreases with $\delta_{max}=6$ because recharge instances drop in number, but the number of working loops remains the same, compared to $\delta_{max}=4$.
}




\section{Related Work}
\label{sec-related}
In this section, we discuss some related research work.

\smallskip
\noindent
\textit{Autonomous charging.}
For static rechargers, docking based autonomous recharging has been studied widely where the robots operate under fuel constraints~\cite{WawerlaJV2007,SundarR2014,StrimelV2014,YuAKR2015,MishraRMA2016,ShnapsR2016}.
Planning for recharge instants and locations is a good strategy to improve the overall performance.
To decide when and where to recharge a robot with static charging stations,~\cite{KannanMBD2013, Rappaport2017CoordinatedRO} leverage on a market-based strategy,~\cite{Aksaray2016DynamicRO} develops an approximate algorithm, and~\cite{TomyLHW19} schedule battery charge in the less busy period of the robots.


Energy management of robots employing mobile rechargers has also received attention from the robotics research community.
To service worker robots by static and mobile rechargers, recharge scheduling strategies are studied (\cite{sourabhral2019}), and a market-based strategy is proposed (\cite{KannanMBD2013}).
However, these work do not consider the path planning aspect of the mobile rechargers.
Litus et al.~\cite{Litus2007} introduce a method to find an optimal set of meeting places for a group of worker robots and a mobile refueling robot based on a given order of robot meetings.
Mathew et al.~\cite{Mathew_TRO_15} address a similar problem where the behaviors of the service robots are given by finite trajectories, repetition of which help perform some tasks persistently.
These papers discuss models for optimal-length path planning for the rechargers. In this paper, however, we discuss optimal recharge scheduling in terms of time and location to \emph{maximize the efficiency of the worker robots}, as well as \emph{minimize path length} for the mobile rechargers.


\smallskip
\noindent
\textit{SMT-based motion planning.}
To find optimal routes for the mobile rechargers to meet the worker robots at different locations is essentially an NP-hard problem~\cite{Vasile2014AnAA,scherercase2016}.
Our algorithmic solution to synthesize a recharge schedule for the worker robots and the trajectories for the recharger robots is based on a reduction of the problem to an SMT (Satisfiability Modulo Theory)~\cite{DeMoura:2011:SMT:1995376.1995394} solving problem. 
SMT solving allows us to solve NP-hard problems captured in the form of constraints expressed as decidable first-order logic formulas from different theories such as linear arithmetic or quantified Boolean logic.
SMT solvers are recently popular in solving the task and motion planning problems for robots~\cite{HungSTLZWG14,NedunuriPMCK14,SahaRKPS14,WangDCK16,SahaRKPS16,Shoukry2016ScalableLS,DesaiSYQS17,GavranMS17,camposicra2019}. SMT solver has been used recently for deciding optimal locations of the charging stations~\cite{KunduS18} and for energy-aware temporal logic motion planning for a mobile robot~\cite{KunduS19}. Imeson et al.~\cite{Imeson2018RoboticPP,Imeson2019AnSA} introduce a framework which couples task allocation and multi-robot motion planning using SMT solver.
To the best of our knowledge, we, for the first time, employ an SMT-based approach to solve the recharge planning problem for multi-robot systems with mobile rechargers.

\section{Conclusion}
\label{sec-conclusion}

In this paper, we present an SMT-based algorithm for solving the recharge scheduling and path planning problem for mobile rechargers that are responsible for supplying energy to the mobile robots involved in perpetual tasks. Though our algorithm is not optimal, it can solve complex planning problems in reasonable time and provide near-optimal solutions in terms of the efficiency of the workers. In the future,  we would  extend our framework to incorporate a more realistic battery model and evaluate our algorithm on a real multi-robot system. 




\bibliographystyle{IEEEtran}
\bibliography{ref,refrnc}

\end{document}